\newcommand{\commentout}[1]{}
\newcommand{\junk}[1]{}
\newcommand{\etal}{\emph{et al.}}
\newcommand{\omm}{{\tt OMM}}
\newtheorem{theorem}{Theorem}
\newtheorem{lemma}{Lemma}
\newcommand{\bu}{{\bf u}}
\newcommand{\bw}{{\bf w}}
\newcommand{\cI}{\mathcal{I}}
\newcommand{\cO}{\mathcal{O}}
\newcommand{\eps}{\varepsilon}
\newcommand{\realset}{\mathbb{R}}
\newcommand{\abs}[1]{\left|#1\right|}
\newcommand{\E}[2]{\mathbb{E}_{#2} \! \left[#1\right]}
\newcommand{\floors}[1]{\left\lfloor#1\right\rfloor}
\newcommand{\I}[1]{\mathds{1} \! \left\{#1\right\}}
\newcommand{\set}[1]{\left\{#1\right\}}
\DeclareMathOperator*{\argmax}{arg\,max\,}
\begin{document}

\title{Matroid Bandits: Fast Combinatorial Optimization with Learning}

\author{Branislav Kveton\textnormal{, }Zheng Wen\textnormal{, }Azin Ashkan\textnormal{, }Hoda Eydgahi\textnormal{, and }Brian Eriksson \\
Technicolor Labs \\
Los Altos, CA \\
\emph{\{branislav.kveton,zheng.wen,azin.ashkan,hoda.eydgahi,brian.eriksson\}@technicolor.com}}

\maketitle

\begin{abstract}
A matroid is a notion of independence in combinatorial optimization which is closely related to computational efficiency. In particular, it is well known that the maximum of a constrained modular function can be found greedily if and only if the constraints are associated with a matroid. In this paper, we bring together the ideas of bandits and matroids, and propose a new class of combinatorial bandits, \emph{matroid bandits}. The objective in these problems is to learn how to maximize a modular function on a matroid. This function is stochastic and initially unknown. We propose a practical algorithm for solving our problem, \emph{Optimistic Matroid Maximization ($\omm$)}; and prove two upper bounds, gap-dependent and gap-free, on its regret. Both bounds are sublinear in time and at most linear in all other quantities of interest. The gap-dependent upper bound is tight and we prove a matching lower bound on a partition matroid bandit. Finally, we evaluate our method on three real-world problems and show that it is practical.
\end{abstract}


\section{Introduction}
\label{sec:introduction}

Combinatorial optimization is a well-established field that has many practical applications, ranging from resource allocation \cite{katoh01combinatorial} to designing network routing protocols \cite{oliveira05survey}. Modern combinatorial optimization problems are often so massive that even low-order polynomial-time solutions are not practical. Fortunately, many important problems, such as finding a minimum spanning tree, can be solved greedily. Such problems can be often viewed as optimization on a \emph{matroid} \cite{whitney35abstract}, a notion of independence in combinatorial optimization which is closely related to computational efficiency. In particular, it is well known that the maximum of a constrained modular function can be found \mbox{greedily if} and only if all feasible solutions to the problem are the independent sets of a matroid \cite{edmonds71matroids}. Matroids are common in practice because they generalize many notions of independence, such as linear independence and forests in graphs.

In this paper, we propose an algorithm for learning how to maximize a stochastic modular function on a matroid. The modular function is represented as the sum of the weights of up to $K$ items, which are chosen from the ground set $E$ of a matroid, which has $L$ items. The weights of the items are stochastic and represented as a vector $\bw \in [0, 1]^L$. The vector $\bw$ is drawn i.i.d. from a probability distribution $P$. The distribution $P$ is initially unknown and we learn it by interacting repeatedly with the environment.

Many real-world optimization problems can be formulated in our setting, such as building a spanning tree for network routing \cite{oliveira05survey}. When the delays on the links of the network are stochastic and their distribution is known, this problem can be solved by finding a minimum spanning tree. When the distribution is unknown, it must be learned, perhaps by exploring routing networks that seem initially suboptimal. We return to this problem in our experiments.

This paper makes three main contributions. First, we bring together the concepts of matroids \cite{whitney35abstract} and bandits \cite{lai85asymptotically,auer02finitetime}, and propose a new class of combinatorial bandits, \emph{matroid bandits}. On one hand, matroid bandits can be viewed as a new learning framework for a broad and important class of combinatorial optimization problems. On the other hand, matroid bandits are a class of $K$-step bandit problems that can be solved both computationally and sample efficiently.

Second, we propose a simple greedy algorithm for solving our problem, which explores based on the optimism in the face of uncertainty. We refer to our approach as \emph{Optimistic Matroid Maximization ($\omm$)}. $\omm$ is both computationally and sample efficient. In particular, the time complexity of $\omm$ is $O(L \log L)$ per episode, comparable to that of sorting $L$ numbers. Moreover, the expected cumulative regret of $\omm$ is sublinear in the number of episodes, and at most linear in the number of items $L$ and the maximum number of chosen items $K$.

Finally, we evaluate $\omm$ on three real-world problems. In the first problem, we learn routing networks. In the second problem, we learn a policy for assigning loans in a microfinance network that maximizes chances that the loans are repaid. In the third problem, we learn a movie recommendation policy. All three problems can be solved efficiently in our framework. This demonstrates that $\omm$ is practical and can solve a wide range of real-world problems.

We adopt the following notation. We write $A + e$ instead of $A \cup \set{e}$, and $A + B$ instead of $A \cup B$. We also write $A - e$ instead of $A \setminus \set{e}$, and $A - B$ instead of $A \setminus B$.


\section{Matroids}
\label{sec:matroids}

A \emph{matroid} is a pair $M = (E, \cI)$, where $E = \set{1, \dots, L}$ is a set of $L$ items, called the \emph{ground set}, and $\cI$ is a family of subsets of $E$, called the \emph{independent sets}. The family $\cI$ is defined by the following properties. First, $\emptyset \in \cI$. Second, every subset of an independent set is independent. Finally, for any $X \in \cI$ and $Y \in \cI$ such that $\abs{X} = \abs{Y} + 1$, there must exist an item $e \in X - Y$ such that $Y + e \in \cI$. This \mbox{is known} as the \emph{augmentation property}. We denote by:
\begin{align}
  E(X) = \set{e: e \in E - X, \ X + e \in \cI}
\end{align}
the set of items that can be added to set $X$ such that the set remains independent.

A set is a \emph{basis} of a matroid if it is a maximal independent set. All bases of a matroid have the same cardinality \cite{whitney35abstract}, which is known as the \emph{rank} of a matroid. In this work, we denote the rank by $K$.

A \emph{weighted matroid} is a matroid associated with a vector of non-negative weights $\bw \in (\realset^+)^L$. The $e$-th entry of $\bw$, $\bw(e)$, is the weight of item $e$. We denote by:
\begin{align}
  f(A, \bw)=\sum_{e \in A} \bw(e)
  \label{eq:return}
\end{align}
the sum of the weights of all items in set $A$. The problem of finding a \emph{maximum-weight basis} of a matroid:
\begin{align}
  A^\ast =
  \argmax_{A \in \cI} f(A, \bw) =
  \argmax_{A \in \cI} \sum_{e \in A} \bw(e)
  \label{eq:optimal basis}
\end{align}
is a well-known combinatorial optimization problem. This problem can be solved greedily (Algorithm~\ref{alg:greedy}). The greedy algorithm has two main stages. First, $A^\ast$ is initialized to $\emptyset$. Second, all items in the ground set are sorted according to their weights, from the highest to the lowest, and greedily added to $A^\ast$ in this order. The item is added to the set $A^\ast$ only if it does not make the set dependent.

\begin{algorithm}[t]
  \caption{The greedy method for finding a maximum-weight basis of a matroid \cite{edmonds71matroids}.}
  \label{alg:greedy}
  \begin{algorithmic}
    \STATE {\bf Input:} Matroid $M = (E, \cI)$, weights $\bw$
    \STATE
    \STATE $A^\ast \gets \emptyset$
    \STATE Let $e_1, \dots, e_L$ be an ordering of items such that:
    \STATE \quad $\bw(e_1) \geq \ldots \geq \bw (e_L)$
    \FORALL{$i = 1, \dots, L$}
      \IF{$(e_i \in E(A^\ast))$}
        \STATE $A^\ast \gets A^\ast + e_i$
      \ENDIF
    \ENDFOR
  \end{algorithmic}
\end{algorithm}


\section{Matroid Bandits}
\label{sec:matroid bandits}

A \emph{minimum spanning tree} is a maximum-weight basis of a matroid. The ground set $E$ of this matroid are the edges of a graph. A set of edges is considered to be independent if it does not contain a cycle. Each edge $e$ is associated with a weight $\bw(e) = u_{\max} - \bu(e)$, where $u_{\max} = \max_e \bu(e)$ and $\bu(e)$ is the weight of edge $e$ in the original graph.

The minimum spanning tree cannot be computed when the weights $\bw(e)$ of the edges are unknown. This may happen in practice. For instance, consider the problem of building a routing network, which is represented as a spanning tree, where the expected delays on the links of the \mbox{network are} initially unknown. In this work, we study a variant of maximizing a modular function on a matroid that can address this kind of problems.


\subsection{Model}
\label{sec:model}

We formalize our learning problem as a matroid bandit. A \emph{matroid bandit} is a pair $(M, P)$, where $M$ is a matroid and $P$ is a probability distribution over the weights $\bw \in \realset^L$ of items $E$ in $M$. The $e$-th entry of $\bw$, $\bw(e)$, is the weight of item $e$. The weights $\bw$ are stochastic and drawn i.i.d. from the distribution $P$. We denote the expected weights of the items by $\bar{\bw} = \mathbb{E}[\bw]$ and assume that each of these weights is non-negative, $\bar{\bw}(e) \geq 0$ for all $e \in E$.

Each item $e$ is associated with an \emph{arm} and we assume that \emph{multiple arms} can be pulled. A subset of arms $A \subseteq E$ can be pulled if and only if $A$ is an independent set. The return for pulling arms $A$ is $f(A, \bw)$ (Equation~\ref{eq:return}), the sum of the weights of all items in $A$. After the arms $A$ are pulled, we observe the weight of each item in $A$, $\bw(e)$ for all $e \in A$. This model of feedback is known as \emph{semi-bandit} \cite{audibert14regret}.

We assume that the matroid $M$ is known and that the distribution $P$ is unknown. Without loss of generality, we assume that the support of $P$ is a bounded subset of $[0, 1]^L$. We would like to stress that we do not make any structural assumptions on $P$.

The optimal solution to our problem is a maximum-weight basis in expectation:
\begin{align}
  A^\ast =
  \argmax_{A \in \cI} \E{f(A, \bw)}{\bw} =
  \argmax_{A \in \cI} \sum_{e \in A} \bar{\bw}(e).
  \label{eq:optimal arm}
\end{align}
The above optimization problem is equivalent to the problem in Equation~\ref{eq:optimal basis}. Therefore, it can be solved greedily by Algorithm~\ref{alg:greedy} when the expected weights $\bar{\bw}$ are known.

Our learning problem is \emph{episodic}. In episode $t$, we choose a basis $A^t$ and gain $f(A^t, \bw_t)$, where $\bw_t$ is the realization of the stochastic weights in episode $t$. Our goal is to learn a policy, a sequence of bases, that minimizes the \emph{expected cumulative regret} in $n$ episodes:
\begin{align}
  R(n) = \E{\sum_{t = 1}^n R_t(\bw_t)}{\bw_1, \dots, \bw_n},
  \label{eq:cumulative regret}
\end{align}
where $R_t(\bw_t) = f(A^\ast, \bw_t) - f(A^t, \bw_t)$ is the regret in episode $t$.


\subsection{Algorithm}
\label{sec:algorithm}

\begin{algorithm}[t]
  \caption{$\omm$: Optimistic matroid maximization.}
  \label{alg:bandit}
  \begin{algorithmic}
    \STATE {\bf Input:} Matroid $M = (E, \cI)$
    \STATE
    \STATE // Initialization
    \STATE Observe $\bw_0 \sim P$
    \STATE $\hat{w}_{e, 1} \gets \bw_0(e) \hspace{1.81in} \forall e \in E$
    \STATE $T_e(0) \gets 1 \hspace{2.02in} \forall e \in E$
    \STATE
    \FORALL{$t = 1, \dots, n$}
      \STATE // Compute UCBs
      \STATE $U_t(e) \gets \hat{w}_{e, T_e(t - 1)} + c_{t - 1, T_e(t - 1)} \hspace{0.53in} \forall e \in E$
      \STATE
      \STATE // Find a maximum-weight basis with respect to $U_t$
      \STATE $A^t \gets \emptyset$
      \STATE Let $e^t_1, \dots, e^t_L$ be an ordering of items such that:
      \STATE \quad $U_t(e^t_1) \geq \ldots \geq U_t(e^t_L)$
      \FORALL{$i = 1, \dots, L$}
        \IF{$(e^t_i \in E(A^t))$}
          \STATE $A^t \gets A^t + e^t_i$
        \ENDIF
      \ENDFOR
      \STATE Observe $\set{\bw_t(e): e \in A^t}$, where $\bw_t \sim P$
      \STATE
      \STATE // Update statistics
      \STATE $T_e(t) \gets T_e (t-1) \hspace{1.43in} \forall e \in E$
      \STATE $T_e(t) \gets T_e(t) + 1 \hspace{1.43in} \forall e \in {A^t}$
      \STATE $\displaystyle \hat{w}_{e, T_e(t)} \gets
      \frac{T_e(t - 1) \hat{w}_{e, T_e(t - 1)} + \bw_t(e)}{T_e(t)} \quad \ \forall e \in {A^t}$
    \ENDFOR
  \end{algorithmic}
\end{algorithm}

Our solution is designed based on the \emph{optimism in the face of uncertainty} principle \cite{munos12optimistic}. In particular, it is a variant of the greedy method for finding a maximum-weight basis of a matroid where the expected weight $\bar{\bw}(e)$ of each item $e$ is substituted with its optimistic estimate $U_t(e)$. Therefore, we refer to our approach as \emph{Optimistic Matroid Maximization ($\omm$)}.

The pseudocode of our algorithm is given in Algorithm~\ref{alg:bandit}. The algorithm can be summarized as follows. First, at the beginning of each episode $t$, we compute the \emph{upper confidence bound (UCB)} on the weight of each item $e$:
\begin{align}
  U_t(e) = \hat{w}_{e, T_e(t - 1)} + c_{t - 1, T_e(t - 1)},
  \label{eq:UCB}
\end{align}
where $\hat{w}_{e, T_e(t - 1)}$ is our estimate of $\bar{\bw}(e)$ at the beginning of episode $t$, $c_{t - 1, T_e(t - 1)}$ represents the radius of the confidence interval around this estimate, and $T_e(t - 1)$ is the number of times that $\omm$ chooses item $e$ before episode $t$. Second, we order all items $e$ by their UCBs (Equation~\ref{eq:UCB}), from the highest to the lowest, and then add them greedily to $A^t$ in this order. The item is added to the set $A^t$ only if it does not make the set dependent. Finally, we choose the basis $A^t$, observe the weights of all items in the basis, and update our model $\hat{w}$ of the world.

The radius:
\begin{align}
  c_{t, s} = \sqrt{2 \log(t) / s}
  \label{eq:confidence radius}
\end{align}
is defined such that each upper confidence bound $U_t(e)$ is with high probability an upper bound on the weight $\bar{\bw}(e)$. The role of the UCBs is to encourage exploration of items that are not chosen very often. As the number of episodes increases, the estimates of the weights $\bar{\bw}$ improve and $\omm$ starts exploiting best items. The $\log(t)$ term increases with time $t$ and enforces exploration, to avoid linear regret.

$\omm$ is a greedy algorithm and therefore is extremely computationally efficient. In particular, let the time complexity of checking for independence, $e^t_i \in E(A^t)$, be $O(g(|A^t|))$. Then the time complexity of $\omm$ is $O(L (\log L + g(K)))$ per episode, comparable to that of sorting $L$ numbers. The design of our algorithm is not surprising and is motivated by prior work \cite{gai12combinatorial}. The main challenge is to derive a tight upper bound on the regret of $\omm$, which would reflect the structure of our problem.


\section{Analysis}
\label{sec:analysis}

In this section, we analyze the regret of $\omm$. Our analysis is organized as follows. First, we introduce basic concepts and notation. Second, we show how to decompose the regret of $\omm$ in a single episode. In particular, we partition the regret of a suboptimal basis into the sum of the regrets of individual items. This part of the analysis relies heavily on the structure of a matroid and is the most novel. Third, we derive two upper bounds, gap-dependent and gap-free, on the regret of $\omm$. Fourth, we prove a lower bound that matches the gap-dependent upper bound. Finally, we summarize the results of our analysis.

\subsection{Notation}
\label{sec:notation}

Before we present our results, we introduce notation used in our analysis. The \emph{optimal basis} is $A^\ast = \set{a^\ast_1, \dots, a^\ast_K}$. We assume that the items in $A^\ast$ are ordered such that $a^\ast_k$ is the $k$-th item with the highest expected weight. In episode $t$, $\omm$ chooses a basis $A^t = \set{a^t_1, \dots, a^t_K}$, where $a^t_k$ is the $k$-th item chosen by $\omm$. We say that item $e$ is \emph{suboptimal} if it belongs to $\bar{A}^\ast = E - A^\ast$, the \emph{set of suboptimal items}. For any pair of suboptimal and optimal items, $e \in \bar{A}^\ast$ and $a^\ast_k$, we define a \emph{gap}:
\begin{align}
  \Delta_{e, k} = \bar{\bw}(a^\ast_k) - \bar{\bw}(e)
  \label{eq:gap}
\end{align}
and use it as a measure of how difficult it is to discriminate the items. For every item $e \in \bar{A}^\ast$, we define a set:
\begin{align}
  \cO_e = \set{k: \Delta_{e, k} > 0},
  \label{eq:positive gap}
\end{align}
the indices of items in $A^\ast$ whose expected weight is higher than that of item $e$. The cardinality of $\cO_e$ is $K_e = \abs{\cO_e}$.

\subsection{Regret Decomposition}
\label{sec:regret decomposition}

Our decomposition is motivated by the observation that all bases of a matroid are of the same cardinality. As a result, the difference in the expected values of any two bases can be always written as the sum of differences in the weights of their items. In particular:
\begin{align}
  \E{f(A^\ast, \bw) - f(A^t, \bw)}{\bw} =
  \sum_{k = 1}^K \Delta_{a^t_k, \pi(k)},
  \label{eq:pairing}
\end{align}
where $\pi: \set{1, \dots, K} \to \set{1, \dots, K}$ is an arbitrary bijection from $A^t$ to $A^\ast$ such that $\pi(k)$ is the index of the item in $A^\ast$ that is paired with the $k$-th item in $A^t$. In this work, we focus on one particular bijection.

\begin{lemma}
\label{lem:bijection} For any two matroid bases $A^\ast$ and $A^t$, there exists a bijection $\pi: \set{1, \dots, K} \to \set{1, \dots, K}$ such that:
\begin{align*}
  \set{a^t_1, \dots, a^t_{k - 1}, a^\ast_{\pi(k)}} \in \cI
  \quad \forall k = 1, \dots, K.
\end{align*}
In addition, $\pi(k) = i$ when $a^t_k = a^\ast_i$ for some $i$.
\end{lemma}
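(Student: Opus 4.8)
The plan is to read the desired conclusion as a perfect matching in a bipartite graph. Put the $K$ positions of $A^t$ on one side and the $K$ indices of $A^\ast$ on the other, and join position $k$ to index $i$ exactly when $\set{a^t_1, \dots, a^t_{k - 1}, a^\ast_i} \in \cI$. A bijection $\pi$ with the stated property is precisely a perfect matching in this graph, so by Hall's theorem it is enough to verify the marriage condition, while also arranging that the matching pairs each shared item with itself.

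First I would record two structural facts. Since $\set{a^t_1, \dots, a^t_{k - 1}}$ is a subset of the basis $A^t$, it is independent, so every prefix is a legitimate independent set to augment. Writing $N(k)$ for the indices adjacent to position $k$, the heredity of $\cI$ gives $N(k') \subseteq N(k)$ whenever $k \le k'$, because the larger prefix contains the smaller one; thus the neighborhoods are nested and shrink as $k$ grows. Consequently, for any set $S$ of positions the union $\bigcup_{k \in S} N(k)$ collapses to $N(\min S)$, which is what makes Hall's condition checkable. Next I would dispose of the diagonal requirement up front: each shared item $a^t_k = a^\ast_i$ is assigned $\pi(k) = i$, and the corresponding edge is present because $\set{a^t_1, \dots, a^t_{k - 1}, a^\ast_i}$ is then a subset of $A^t$. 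It remains to match the non-shared positions $P = \set{k : a^t_k \notin A^\ast}$ to the non-shared indices $Q = \set{i : a^\ast_i \notin A^t}$, two sets of equal size.

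The crux, and where I expect the real work, is Hall's condition for this restricted graph. Given $S \subseteq P$ with $m = \min S$, the nesting reduces the neighborhood to $N(m) \cap Q$, so I must exhibit at least $\abs{S}$ non-shared optimal items that augment the prefix $Y = \set{a^t_1, \dots, a^t_{m - 1}}$. The key device is to enlarge $Y$ to $Y'$ by adjoining all shared items $a^t_j$ with $j \ge m$; the set $Y'$ is still a subset of $A^t$, hence independent, and extending $Y'$ to a basis by repeated augmentation using only the spanning set $A^\ast$ adjoins exactly $\abs{P \cap \set{m, \dots, K}} \ge \abs{S}$ elements. By the construction of $Y'$ every shared optimal item already lies in $Y'$, so the adjoined elements avoid all shared items and therefore lie in $Q$; and each of them, being addable to $Y'$, is in particular addable to $Y$, so it lies in $N(m)$. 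This furnishes the required neighbors and closes Hall's condition.

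Combining the Hall matching on $P \times Q$ with the diagonal assignment on the shared items yields the bijection $\pi$ satisfying both clauses of the lemma. The main obstacle is not producing a single augmenting item for each $k$ in isolation, which is immediate from the augmentation property, but obtaining a globally consistent bijection that simultaneously honors the shared-item constraint; the careful choice of $Y'$ is exactly what reconciles these two demands.
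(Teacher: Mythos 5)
Your proof is correct, but it takes a genuinely different route from the paper's. The paper argues by a backward greedy exchange: starting from the basis $A^t$, it swaps out $a^t_K, a^t_{K-1}, \dots, a^t_1$ one at a time, each step invoking the augmentation property on the current intermediate basis to pull in a fresh item of $A^\ast$; injectivity of $\pi$ is automatic because previously inserted optimal items remain inside the intermediate sets, and the independence of each set $\set{a^t_1, \dots, a^t_{k-1}, a^\ast_{\pi(k)}}$ follows by heredity from the intermediate bases. You instead phrase the statement as a perfect-matching problem and verify Hall's condition, exploiting the fact that the neighborhoods $N(k)$ are nested under heredity (so Hall's condition collapses to a single check at $m = \min S$), and your key counting step --- extending the prefix $Y$ enlarged by the shared items to a basis using only elements of $A^\ast$ --- correctly produces at least $\abs{S}$ non-shared augmenting indices, each of which lies in $N(m) \cap Q$ by heredity. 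The trade-off: the paper's argument is elementary, self-contained, and constructive, needing no external combinatorial theorem, while yours outsources the global consistency to Hall's theorem, which makes the ``simultaneous bijection'' difficulty transparent and isolates the matroid content in one cardinality estimate; your treatment of shared items (forcing the diagonal and deleting them before matching) is also arguably cleaner than the paper's, which handles them as a modification of the exchange steps at the end. One small point worth making explicit in your write-up: extending $Y'$ to a basis inside $A^\ast$ uses the augmentation property in its general form (augmenting from an independent set of any larger size, not just size $\abs{Y'} + 1$), which does follow from the version stated in the paper by passing to subsets of $A^\ast$ via heredity, but deserves a sentence.
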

\begin{proof}
The lemma is proved in Appendix.
\end{proof}

The bijection $\pi$ in Lemma~\ref{lem:bijection} has two important properties. First, $\set{a^t_1, \dots, a^t_{k - 1}, a^\ast_{\pi(k)}} \in \cI$ for all $k$. In other words, $\omm$ can choose item $a^\ast_{\pi(k)}$ at step $k$. However, $\omm$ selects item $a^t_k$. By the design of $\omm$, this can happen only when the UCB of item $a^t_k$ is larger or equal to that of item $a^\ast_{\pi(k)}$. As a result, we know that $U_t(a^t_k) \geq U_t(a^\ast_{\pi(k)})$ in all steps $k$. Second, Lemma~\ref{lem:bijection} guarantees that every optimal item in $A^t$ is paired with the same item in $A^\ast$.

In the rest of the paper, we represent the bijection $\pi$ using an indicator function. The indicator function:
\begin{align}
  \mathds{1}_{e, k}(t) = \I{\exists i: a^t_i = e, \ \pi(i) = k}
  \label{eq:event}
\end{align}
indicates the event that item $e$ is chosen instead of item $a^\ast_k$ in episode $t$. Based on our new representation, we rewrite Equation~\ref{eq:pairing} as:
\begin{align}
  \sum_{k = 1}^K \Delta_{a^t_k, \pi(k)}
  & = \sum_{e \in \bar{A}^\ast} \sum_{k = 1}^K \Delta_{e, k} \mathds{1}_{e, k}(t)
  \nonumber \\
  & \leq \sum_{e \in \bar{A}^\ast} \sum_{k = 1}^{K_e} \Delta_{e, k} \mathds{1}_{e, k}(t)
  \label{eq:episodic regret}
\end{align}
and then bound it from above. The last inequality is due to neglecting the negative gaps.

The above analysis applies to any basis $A^t$ in any episode $t$. The results of our analysis are summarized below.

\begin{theorem}
\label{thm:regret decomposition} The expected regret of choosing any basis $A^t$ in episode $t$ is bounded as:
\begin{align*}
  \E{f(A^\ast, \bw) - f(A^t, \bw)}{\bw} \leq
  \sum_{e \in \bar{A}^\ast} \sum_{k = 1}^{K_e} \Delta_{e, k} \mathds{1}_{e, k}(t).
\end{align*}
The indicator function $\mathds{1}_{e, k}(t)$ indicates the event that item $e$ is chosen instead of item $a^\ast_k$ in episode $t$. When the event $\mathds{1}_{e, k}(t)$ happens, $U_t(e) \geq U_t(a^\ast_k)$. Moreover:
\begin{alignat*}{2}
  \sum_{e \in \bar{A}^\ast} \sum_{k = 1}^{K_e} \mathds{1}_{e, k}(t) & \leq K &
  \qquad & \forall t \\
  \sum_{k = 1}^{K_e} \mathds{1}_{e, k}(t) & \leq 1 &
  \qquad & \forall t, e \in \bar{A}^\ast.
\end{alignat*}
\end{theorem}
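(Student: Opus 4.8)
The first inequality is essentially already in hand: combining the regret decomposition of Equation~\ref{eq:pairing} (which is valid precisely because every basis of a matroid has the same cardinality $K$) with the bijection $\pi$ supplied by Lemma~\ref{lem:bijection} yields exactly the chain in Equation~\ref{eq:episodic regret}, whose right-hand side is the claimed bound. So the plan is to cite Equation~\ref{eq:episodic regret} for the inequality and then verify the three structural statements about $\mathds{1}_{e, k}(t)$, each of which I would deduce from one of the two properties of $\pi$ guaranteed by Lemma~\ref{lem:bijection}.

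For the UCB property, suppose $\mathds{1}_{e, k}(t) = 1$, so there is an index $i$ with $a^t_i = e$ and $\pi(i) = k$. The first property of Lemma~\ref{lem:bijection} gives $\set{a^t_1, \dots, a^t_{i - 1}, a^\ast_k} \in \cI$, i.e. $a^\ast_k \in E(\set{a^t_1, \dots, a^t_{i - 1}})$, so $a^\ast_k$ is a legal choice for $\omm$ at step $i$. The one point needing care is that $a^\ast_k$ has not already been selected, i.e. $a^\ast_k \notin \set{a^t_1, \dots, a^t_{i - 1}}$; this is where the second property of Lemma~\ref{lem:bijection} enters, since if $a^\ast_k = a^t_j$ for some $j < i$ then $\pi(j) = k = \pi(i)$ would contradict that $\pi$ is a bijection. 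Hence $a^\ast_k$ is a genuine candidate at step $i$, and because $\omm$ scans items in decreasing order of $U_t$ and keeps the first feasible one, its choice of $a^t_i = e$ over $a^\ast_k$ forces $U_t(e) \ge U_t(a^\ast_k)$.

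For the two counting bounds I would use a pigeonhole argument over the $K$ selection steps. Each step $i \in \set{1, \dots, K}$ produces exactly one pair $(a^t_i, \pi(i))$, and it can make a term of the double sum $\sum_{e \in \bar{A}^\ast} \sum_{k = 1}^{K_e} \mathds{1}_{e, k}(t)$ equal to $1$ only for $e = a^t_i$ and $k = \pi(i)$, and only when $a^t_i \in \bar{A}^\ast$ and $\pi(i) \in \cO_{a^t_i}$. Summing over the $K$ steps therefore gives at most $K$. For the per-item bound, fix a suboptimal $e$: since $A^t$ is a set there is at most one step $i$ with $a^t_i = e$, and $\pi$ assigns that step the single index $\pi(i)$, so at most one term $\mathds{1}_{e, k}(t)$ can equal $1$, yielding $\sum_{k = 1}^{K_e} \mathds{1}_{e, k}(t) \le 1$.

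The most delicate step is the ``genuine candidate'' subtlety in the UCB argument, namely ruling out the possibility that $\omm$ skipped $a^\ast_k$ only because it was already in the partial basis rather than because it carried a smaller UCB; this is the one place where both properties of $\pi$ are needed together. Everything else is bookkeeping, and the real mathematical content lives in Lemma~\ref{lem:bijection}, which this theorem merely repackages into the three facts consumed by the subsequent regret bounds.
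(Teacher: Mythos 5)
Your proposal is correct and follows essentially the same route as the paper: it combines Equation~\ref{eq:pairing} with the bijection of Lemma~\ref{lem:bijection}, drops the negative gaps to get Equation~\ref{eq:episodic regret}, reads off the UCB comparison from the exchange property $\set{a^t_1, \dots, a^t_{i-1}, a^\ast_{\pi(i)}} \in \cI$ together with the greedy order of $\omm$, and derives both counting bounds from the bijectivity of $\pi$. Your explicit treatment of the ``genuine candidate'' subtlety (ruling out $a^\ast_k$ already sitting in the partial basis via injectivity of $\pi$ and the matching property $\pi(j) = i$ when $a^t_j = a^\ast_i$) is a detail the paper leaves implicit, but it is the same argument, just spelled out more carefully.
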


The last two inequalities follow from the fact that $\mathds{1}_{e, k}(t)$ is a bijection from $A^t$ to $A^\ast$, every item in the suboptimal basis $A^t$ is matched with one unique item in $A^\ast$.

One remarkable aspect of our regret decomposition is that the exact form of the bijection is not required in the rest of our analysis. We only rely on the properties of $\mathds{1}_{e, k}(t)$ that are stated in Theorem~\ref{thm:regret decomposition}.

\subsection{Upper Bounds}
\label{sec:upper bounds}

Our first result is a gap-dependent bound.

\begin{theorem}[gap-dependent bound]
\label{thm:gap-dependent} The expected cumulative regret of $\omm$ is bounded as:
\begin{align*}
  R(n) \leq
  \sum_{e \in \bar{A}^\ast} \frac{16}{\Delta_{e, K_e}} \log n +
  \sum_{e \in \bar{A}^\ast} \sum_{k = 1}^{K_e} \Delta_{e, k} \frac{4}{3} \pi^2.
\end{align*}
\end{theorem}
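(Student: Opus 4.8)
The plan is to combine the per-episode decomposition of Theorem~\ref{thm:regret decomposition} with a standard optimism-in-the-face-of-uncertainty analysis, but with a counting step tailored to the fact that a single per-item counter $T_e(t)$ is shared across all confusion events $\mathds{1}_{e,k}(t)$. First I would sum the bound of Theorem~\ref{thm:regret decomposition} over the $n$ episodes and apply linearity of expectation to write
\begin{align*}
  R(n) \leq \sum_{e \in \bar{A}^\ast} \sum_{k=1}^{K_e} \Delta_{e,k}\, \E{\sum_{t=1}^n \mathds{1}_{e,k}(t)}{}.
\end{align*}
The whole problem then reduces to controlling, for each suboptimal item $e$, the gap-weighted count $\sum_{t=1}^n \sum_{k=1}^{K_e} \Delta_{e,k}\mathds{1}_{e,k}(t)$.

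Next I would split every confusion event into a \emph{good} part, on which both UCBs concentrate, and a \emph{bad} part, on which at least one fails. Writing $s = T_e(t-1)$, I call the event bad if either $U_t(a^\ast_k) < \bar{\bw}(a^\ast_k)$ (the optimal UCB is not optimistic) or $\hat{w}_{e,s} > \bar{\bw}(e) + c_{t-1,s}$ (the empirical mean of $e$ overshoots). On the complementary good event, whenever $\mathds{1}_{e,k}(t)=1$ we have $U_t(e) \geq U_t(a^\ast_k)$ by Theorem~\ref{thm:regret decomposition}, and chaining the two concentration inequalities gives $\bar{\bw}(e) + 2c_{t-1,s} \geq \bar{\bw}(a^\ast_k)$, i.e.
\begin{align*}
  \Delta_{e,k} \leq 2 c_{t-1,s} = 2\sqrt{2\log(t-1)/s}
  \quad\Longleftrightarrow\quad
  s \leq \frac{8\log n}{\Delta_{e,k}^2}.
\end{align*}

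For the good part I would exploit the shared counter. Because $e$ is suboptimal, every episode in which it is chosen contributes exactly one confusion with some $a^\ast_k$ (the property $\sum_{k=1}^{K_e}\mathds{1}_{e,k}(t)\leq 1$ from Theorem~\ref{thm:regret decomposition}), and each such episode increments $T_e$ by one; hence the values $s = T_e(t-1)$ at the good positive-gap pulls of $e$ are distinct. Since the smallest positive gap is $\Delta_{e,K_e}$, the displayed inequality forces $s \leq 8\log n/\Delta_{e,K_e}^2 =: M_e$, so these counts lie in $\{1,\dots,\lfloor M_e\rfloor\}$, and at the pull with count $s$ the incurred gap satisfies $\Delta_{e,k} \leq \sqrt{8\log n/s}$. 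Summing and bounding $\sum_{s=1}^{M_e} s^{-1/2} \leq 2\sqrt{M_e}$ by an integral yields
\begin{align*}
  \sum_{\text{good pulls}} \Delta_{e,k}
  \leq \sqrt{8\log n}\cdot 2\sqrt{M_e}
  = \frac{16\log n}{\Delta_{e,K_e}},
\end{align*}
which is exactly the first term of the theorem after summing over $e \in \bar{A}^\ast$.

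Finally, the bad part produces the $\frac{4}{3}\pi^2$ term. By Hoeffding's inequality each concentration failure has probability at most $(t-1)^{-4}$ for fixed sample counts (using $s\,c_{t-1,s}^2 = 2\log(t-1)$); a union bound over the episodes $t$ and over the possible values of the two counters $T_e(t-1), T_{a^\ast_k}(t-1)\in\{1,\dots,t\}$ turns this into an $O(t^{-2})$ per-episode bound, and $\sum_{t=1}^\infty t^{-2} = \pi^2/6$ makes the total expected number of bad episodes for each pair $(e,k)$ a constant; weighting by the regret $\Delta_{e,k}$ and tracking the constants from the two failure events gives the stated $\frac{4}{3}\pi^2$. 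The main obstacle is the good-part counting step: the weighting by the heterogeneous gaps $\Delta_{e,k}$ must be collapsed onto the single worst gap $\Delta_{e,K_e}$, and this only works because the per-item counter is shared and increments once per pull, so the telescoped counts $s$ can be summed against $s^{-1/2}$ rather than giving each index $k$ its own $\Theta(\log n/\Delta_{e,k}^2)$ budget, which would be loose.
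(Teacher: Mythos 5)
Your proof is correct, and it shares the paper's overall architecture (the reduction via Theorem~\ref{thm:regret decomposition}, the Hoeffding plus union-bound treatment of concentration failures over all possible counter values, and the crucial use of the shared counter $T_e$ together with $\sum_{k=1}^{K_e}\mathds{1}_{e,k}(t)\le 1$), but your handling of the small-count part is genuinely different. The paper splits each indicator at a deterministic per-$k$ threshold $\ell_{e,k}=\floors{\frac{8}{\Delta_{e,k}^2}\log n}$, bounds the small-count contribution by a worst-case peeling over the ordered gaps $\Delta_{e,1}\ge\dots\ge\Delta_{e,K_e}$ (the first $\ell_{e,1}$ pulls at gap $\Delta_{e,1}$, the next $\ell_{e,2}-\ell_{e,1}$ at $\Delta_{e,2}$, and so on), and then invokes the telescoping Lemma~\ref{lem:multiple optimal pulls} to collapse the resulting bracketed sum to $2/\Delta_{e,K_e}$. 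You instead split on concentration success/failure and observe that on a good confusion the gap incurred at the pull with count $s=T_e(t-1)$ obeys $\Delta_{e,k}\le 2c_{t-1,s}\le\sqrt{8\log(n)/s}$, then sum over the distinct counter values via the integral bound $\sum_{s\le M_e}s^{-1/2}\le 2\sqrt{M_e}$; this yields the identical $\frac{16}{\Delta_{e,K_e}}\log n$ without sorting the gaps and without any analogue of Lemma~\ref{lem:multiple optimal pulls}. Your route is shorter and makes the mechanism transparent (each pull of the suboptimal item $e$ burns one value of the shared counter, and the maximal burnable gap decays as $s^{-1/2}$), while the paper's peeling makes the extremal gap-to-count assignment explicit and isolates the algebra in a reusable lemma; the constants agree exactly. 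Two cosmetic points: your displayed ``$\Longleftrightarrow$'' is only an implication (it uses $\log(t-1)\le\log n$), which is the direction you need; and the constant $\frac{4}{3}\pi^2$, which you assert by ``tracking the constants,'' works out exactly as in the paper's Lemma~\ref{lem:pulls} --- two failure events, a union over both counters giving $2(t+1)^2t^{-4}\le 8t^{-2}$, and $8\cdot\pi^2/6=\frac{4}{3}\pi^2$ per pair $(e,k)$, whence the weighted term $\sum_{k=1}^{K_e}\Delta_{e,k}\frac{4}{3}\pi^2$.
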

\begin{proof}
First, we bound the expected regret in episode $t$ using Theorem~\ref{thm:regret decomposition}:
\begin{align}
  R(n)
  & = \sum_{t = 1}^n \E{\E{R_t(\bw_t)}{\bw_t}}{\bw_1, \dots, \bw_{t - 1}} \nonumber \\
  & \leq \sum_{t = 1}^n \E{\sum_{e \in \bar{A}^\ast} \sum_{k = 1}^{K_e} \Delta_{e, k}
  \mathds{1}_{e, k}(t)}{\bw_1, \dots, \bw_{t - 1}} \nonumber \\
  & = \sum_{e \in \bar{A}^\ast} \sum_{k = 1}^{K_e} \Delta_{e, k}
  \E{\sum_{t = 1}^n \mathds{1}_{e, k}(t)}{\bw_1, \dots, \bw_n}.
  \label{eq:decomposed regret}
\end{align}
Second, we bound the expected cumulative regret associated with each item $e \in \bar{A}^\ast$. The key idea of this step is to decompose the indicator $\mathds{1}_{e, k}(t)$ as:
\begin{align}
  \mathds{1}_{e, k}(t)
  = & \ \mathds{1}_{e, k}(t) \I{T_e(t - 1) \leq \ell_{e, k}} + {} \\
  & \ \mathds{1}_{e, k}(t) \I{T_e(t - 1) > \ell_{e, k}} \nonumber
\end{align}
and choose $\ell_{e, k}$ appropriately. By Lemma~\ref{lem:pulls}, the regret associated with $T_e(t - 1) > \ell_{e, k}$ is bounded as:
\begin{align}
  & \sum_{k = 1}^{K_e} \Delta_{e, k}
  \E{\sum_{t = 1}^n \mathds{1}_{e, k}(t) \I{T_e(t - 1) > \ell_{e, k}}}{\bw_1, \dots, \bw_n} \nonumber \\
  & \quad \leq \sum_{k = 1}^{K_e} \Delta_{e, k} \frac{4}{3} \pi^2
\end{align}
when $\ell_{e, k} = \floors{\frac{8}{\Delta_{e, k}^2} \log n}$. For the same value of $\ell_{e, k}$, the regret associated with $T_e(t - 1) \leq \ell_{e, k}$ is bounded as:
\begin{align}
  & \sum_{k = 1}^{K_e} \Delta_{e, k}
  \E{\sum_{t = 1}^n \mathds{1}_{e, k}(t) \I{T_e(t - 1) \leq \ell_{e, k}}}
  {\bw_1, \dots, \bw_n} \nonumber \\
  & \quad \leq \max_{\bw_1, \dots, \bw_n} \Bigg[
  \sum_{t = 1}^n \sum_{k = 1}^{K_e} \Delta_{e, k} \mathds{1}_{e, k}(t) \times {}
  \label{eq:trivial regret} \\
  & \hspace{0.91in} \I{T_e(t - 1) \leq \frac{8}{\Delta_{e, k}^2} \log n}\Bigg]. \nonumber
\end{align}
The next step of our proof is based on three observations. First, the gaps are ordered such that $\Delta_{e, 1} \geq \ldots \geq \Delta_{e, K_e}$. Second, by the design of $\omm$, the counter $T_e(t)$ increases when the event $\mathds{1}_{e, k}(t)$ happens, for any $k$. Finally, by Theorem~\ref{thm:regret decomposition}, $\sum_{k = 1}^{K_e} \mathds{1}_{e, k}(t) \leq 1$ for any given $e$ and $t$. Based on these facts, we bound Equation~\ref{eq:trivial regret} from above by:
\begin{align}
  \!\!\!\!\!\!\!
  \left[\Delta_{e, 1} \frac{1}{\Delta_{e, 1}^2} + \sum_{k = 2}^{K_e} \Delta_{e, k} \!
  \left(\frac{1}{\Delta_{e, k}^2} - \frac{1}{\Delta_{e, k - 1}^2}\right)\!\right] \! 8 \log n.
\end{align}
By Lemma~\ref{lem:multiple optimal pulls}, the above term is bounded by $\displaystyle \frac{16}{\Delta_{e, K_e}} \log n$. Finally, we combine all of the above inequalities and get:
\begin{align}
  & \sum_{k = 1}^{K_e} \Delta_{e, k}
  \E{\sum_{t = 1}^n \mathds{1}_{e, k}(t)}{\bw_1, \dots, \bw_n} \nonumber \\
  & \quad \leq \frac{16}{\Delta_{e, K_e}} \log n + \sum_{k = 1}^{K_e} \Delta_{e, k} \frac{4}{3} \pi^2.
  \label{eq:per-item regret}
\end{align}
Our main claim is obtained by summing over all suboptimal items $e \in \bar{A}^\ast$.
\end{proof}

We also prove a gap-free bound.

\begin{theorem}[gap-free bound]
\label{thm:gap-free} The expected cumulative regret of $\omm$ is bounded as:
\begin{align*}
  R(n) \leq 8 \sqrt{K L n \log n} + \frac{4}{3} \pi^2 K L.
\end{align*}
\end{theorem}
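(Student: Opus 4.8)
The plan is to reuse the two-way decomposition of $\mathds{1}_{e,k}(t)$ from the proof of Theorem~\ref{thm:gap-dependent} and only rework the part that divides by a gap. I would again write $\mathds{1}_{e,k}(t) = \mathds{1}_{e,k}(t)\I{T_e(t-1)\le\ell_{e,k}} + \mathds{1}_{e,k}(t)\I{T_e(t-1)>\ell_{e,k}}$ with $\ell_{e,k}=\floors{\frac{8}{\Delta_{e,k}^2}\log n}$. The ``well-explored'' part $T_e(t-1)>\ell_{e,k}$ is handled exactly as before: by the concentration argument of Lemma~\ref{lem:pulls} its total contribution is at most $\sum_{e\in\bar A^\ast}\sum_{k=1}^{K_e}\Delta_{e,k}\frac43\pi^2$, and since every gap satisfies $\Delta_{e,k}\le 1$, there are at most $L$ suboptimal items, and $K_e\le K$, this is bounded by $\frac43\pi^2 KL$ --- already the second term of the claim. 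So the whole task reduces to bounding the ``under-explored'' part without reintroducing $1/\Delta_{e,K_e}$.

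The key step for the under-explored part is a pointwise bound that converts the gap into a pull count: on the event $\I{T_e(t-1)\le\frac{8}{\Delta_{e,k}^2}\log n}$ we have $\Delta_{e,k}\le\sqrt{8\log n / T_e(t-1)}$, hence
\[
  \Delta_{e,k}\,\mathds{1}_{e,k}(t)\,\I{T_e(t-1)\le\tfrac{8}{\Delta_{e,k}^2}\log n}\ \le\ \sqrt{\frac{8\log n}{T_e(t-1)}}\,\mathds{1}_{e,k}(t),
\]
which no longer divides by a gap. Summing over $k$ and using $\sum_{k=1}^{K_e}\mathds{1}_{e,k}(t)\le 1$ (Theorem~\ref{thm:regret decomposition}) leaves, for each episode in which $e$ is chosen, a single term $\sqrt{8\log n / T_e(t-1)}$.

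I would then sum over $t$ and $e$. For a fixed item $e$, as $t$ ranges over the episodes in which $e$ is pulled, the counter $T_e(t-1)$ takes the values $1,2,\dots,N_e$, where $N_e$ is the total number of pulls of $e$; therefore $\sum_t \sqrt{8\log n / T_e(t-1)}\,\mathds{1}_{e,k}(t)\le \sqrt{8\log n}\sum_{j=1}^{N_e} j^{-1/2}\le 2\sqrt{8\log n}\sqrt{N_e}$. The final and most important step aggregates over items while exploiting the matroid budget: because every basis has cardinality $K$, we have $\sum_{e\in\bar A^\ast} N_e=\sum_t \abs{A^t\cap\bar A^\ast}\le Kn$, and $\abs{\bar A^\ast}\le L$, so Cauchy--Schwarz gives $\sum_{e\in\bar A^\ast}\sqrt{N_e}\le\sqrt{\abs{\bar A^\ast}\sum_e N_e}\le\sqrt{KLn}$. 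Combining, the under-explored regret is at most $2\sqrt{8\log n}\sqrt{KLn}=4\sqrt2\,\sqrt{KLn\log n}\le 8\sqrt{KLn\log n}$, and adding the $\frac43\pi^2 KL$ term proves the bound.

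The main obstacle is precisely this last aggregation: a naive per-item bound yields $L\sqrt{n\log n}$, linear in $L$, which is too weak. The essential trick is to replace the gap by the $\sqrt{8\log n / T_e(t-1)}$ form so that what remains is a pure pull count, and then to use the global constraint $\sum_e N_e\le Kn$ (a direct consequence of all bases having the same cardinality $K$) together with the concavity of $\sqrt{\cdot}$ to trade the factor $L$ for $\sqrt{KL}$. A secondary point to check is the bookkeeping of the counter $T_e$ (the initialization sets $T_e(0)=1$), so that the harmonic-type estimate $\sum_{j=1}^{N_e} j^{-1/2}\le 2\sqrt{N_e}$ is applied over the correct index range.
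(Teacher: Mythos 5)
Your proof is correct, and it takes a genuinely different route from the paper's. The paper proves Theorem~\ref{thm:gap-free} by splitting the regret by gap size: it fixes a threshold $\eps$, bounds the contribution of pairs with $\Delta_{e, k} > \eps$ by recycling the gap-dependent machinery of Theorem~\ref{thm:gap-dependent} (giving $\frac{16}{\eps} L \log n + \frac{4}{3} \pi^2 K L$), bounds the pairs with $\Delta_{e, k} \leq \eps$ trivially by $\eps K n$ (at most $K$ suboptimal choices per episode), and finally optimizes $\eps = 4 \sqrt{L \log n / (K n)}$. You instead keep the per-pair threshold $\ell_{e, k}$ and, on the under-explored event, invert it pointwise into $\Delta_{e, k} \leq \sqrt{8 \log n / T_e(t - 1)}$, which removes the gap from the bound altogether; then $\sum_{k} \mathds{1}_{e, k}(t) \leq 1$ (Theorem~\ref{thm:regret decomposition}) collapses the sum over $k$, the counter bookkeeping gives $\sum_{j = 1}^{N_e} j^{-1/2} \leq 2 \sqrt{N_e}$ per item (your index range is right, since $T_e(0) = 1$ and $T_e$ increments exactly on pulls of $e$), and Cauchy--Schwarz with the pathwise budget $\sum_e N_e \leq K n$ trades the factor $L$ for $\sqrt{K L}$. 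Both arguments rest on the same pillars --- Lemma~\ref{lem:pulls} for the well-explored part and the bijection properties of Theorem~\ref{thm:regret decomposition} --- so the difference is confined to the under-explored part. What each buys: the paper's $\eps$-split is shorter given that the intermediate per-item bound of Theorem~\ref{thm:gap-dependent} is already on the table, but it introduces a free parameter that must be tuned and applies Lemma~\ref{lem:multiple optimal pulls} a second time; your argument is parameter-free, is purely pathwise in the under-explored part (the Cauchy--Schwarz step holds realization by realization, so no extra care with expectations is needed), and yields the slightly sharper constant $4\sqrt{2} \approx 5.66$ in place of $8$. This self-normalized, count-based style of aggregation is in fact the standard device in later, tighter analyses of stochastic combinatorial semi-bandits, so your route generalizes more readily beyond matroids than the paper's gap-threshold split.
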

\begin{proof}
The key idea is to decompose the expected cumulative regret of $\omm$ into two parts, where the gaps are larger than $\eps$ and at most $\eps$. We analyze each part separately and then set $\eps$ to get the desired result.

Let $K_{e, \eps}$ be the number of optimal items whose expected weight is higher than that of item $e$ by more than $\eps$ and:
\begin{align}
  Z_{e, k}(n) = \E{\sum_{t = 1}^n \mathds{1}_{e, k}(t)}{\bw_1, \dots, \bw_n}.
\end{align}
Then, based on Equation~\ref{eq:decomposed regret}, the regret of $\omm$ is bounded for any $\eps$ as:
\begin{align}
  R(n)
  \leq & \sum_{e \in \bar{A}^\ast} \sum_{k = 1}^{K_{e, \eps}} \Delta_{e, k} Z_{e, k}(n) + {}
  \label{eq:gap-free split} \\
  & \sum_{e \in \bar{A}^\ast} \sum_{k = K_{e, \eps} + 1}^{K_e} \Delta_{e, k} Z_{e, k}(n).
  \nonumber
\end{align}
The first term can be bounded similarly to Equation~\ref{eq:per-item regret}:
\begin{align}
  & \sum_{e \in \bar{A}^\ast} \sum_{k = 1}^{K_{e, \eps}} \Delta_{e, k} Z_{e, k}(n) \nonumber \\
  & \quad \leq \sum_{e \in \bar{A}^\ast} \frac{16}{\Delta_{e, K_{e, \eps}}} \log n +
  \sum_{e \in \bar{A}^\ast} \sum_{k = 1}^{K_{e, \eps}} \Delta_{e, k} \frac{4}{3} \pi^2 \nonumber \\
  & \quad \leq \frac{16}{\eps} L \log n + \frac{4}{3} \pi^2 K L.
\end{align}
The second term is bounded trivially as:
\begin{align}
  \sum_{e \in \bar{A}^\ast} \sum_{k = K_{e, \eps} + 1}^{K_e} \Delta_{e, k} Z_{e, k}(n) \leq
  \eps K n
\end{align}
because all gaps $\Delta_{e, k}$ are bounded by $\eps$ and the maximum number of suboptimally chosen items in $n$ episodes is $K n$ (Theorem~\ref{thm:regret decomposition}). Based on our upper bounds, we get:
\begin{align}
  R(n) \leq \frac{16}{\eps} L \log n + \eps K n + \frac{4}{3} \pi^2 K L
\end{align}
and then set $\displaystyle \eps = 4 \sqrt{\frac{L \log n}{K n}}$. This concludes our proof.
\end{proof}

\subsection{Lower Bounds}
\label{sec:lower bounds}

We derive an asymptotic lower bound on the expected cumulative regret $R(n)$ that has the same dependence on the gap and $n$ as the upper bound in Theorem~\ref{thm:gap-dependent}. This bound is proved on a class of matroid bandits that are equivalent to $K$ Bernoulli bandits.

Specifically, we prove the lower bound on a \emph{partition matroid bandit}, which is defined as follows. Let $E$ be a set of $L$ items and $B_1, \dots, B_K$ be a partition of this set. Let the family of independent sets be defined as:
\begin{align}
  \cI = \set{I \subseteq E: \left(\forall k: |I \cap B_k| \leq 1\right)}.
  \label{eq:partition independent set}
\end{align}
Then $M = (E, \cI)$ is a \emph{partition matroid} of rank $K$. Let $P$ be a probability distribution over the weights of the items, where the weight of each item is distributed independently of the other items. Let the weight of item $e$ be drawn i.i.d. from a Bernoulli distribution with mean:
\begin{align}
  \bar{\bw}(e) = \left\{
  \begin{array}{ll}
  0.5 & e = \min_{i \in B_k} i \\
  0.5 - \Delta & \textrm{otherwise},
  \end{array}
  \right.
  \label{eq:partition weight}
\end{align}
where $\Delta > 0$. Then $\tilde{B} = (M, P)$ is our partition matroid bandit. The key property of $\tilde{B}$ is that it is equivalent to $K$ independent Bernoulli bandits, one for each partition. The optimal item in each partition is the item with the smallest index, $\min_{i \in B_k} i$. All gaps are $\Delta$.

To formalize our result, we need to introduce the notion of \emph{consistent algorithms}. We say that the algorithm is consistent if for any matroid bandit, any suboptimal $e \in \bar{A}^\ast$, and any $\alpha > 0$, $\mathbb{E}[T_e(n)] = o(n^\alpha)$, where $T_e(n)$ is the number of times that item $e$ is chosen in $n$ episodes. In the rest of our analysis, we focus only on consistent algorithms. This is without loss of generality. In particular, by definition, an inconsistent algorithm performs poorly on some problems, and therefore extremely well on others. Because of this, it is difficult to prove good problem-dependent lower bounds for inconsistent algorithms. Our main claim is below.

\begin{theorem}
\label{thm:lower bound} For any partition matroid bandit $\tilde{B}$ that is defined in Equations~\ref{eq:partition independent set} and \ref{eq:partition weight}, and parameterized by $L$, $K$, and $0 < \Delta < 0.5$; the regret of any consistent algorithm is bounded from below as:
\begin{align*}
  \liminf_{n \rightarrow \infty} \frac{R(n)}{\log n} \geq \frac{L - K}{4 \Delta}.
\end{align*}
\end{theorem}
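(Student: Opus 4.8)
The plan is to exploit the equivalence, already noted in the paper, between $\tilde{B}$ and $K$ independent Bernoulli bandits, and then apply the classical asymptotic lower bound of Lai and Robbins \cite{lai85asymptotically} to each of the $L - K$ suboptimal items separately. First I would write the expected cumulative regret exactly as a weighted sum of suboptimal pull counts. Since in the partition matroid every basis selects exactly one item from each block $B_k$, and $\bw_t$ is independent of the past choice $A^t$, taking the expectation over $\bw_1, \dots, \bw_n$ gives
\[
  R(n) = \Delta \sum_{e \in \bar{A}^\ast} \mathbb{E}[T_e(n)],
\]
because each episode in which a suboptimal item $e$ is chosen contributes exactly $\Delta = 0.5 - (0.5 - \Delta)$ to the expected regret, and $\abs{\bar{A}^\ast} = L - K$.

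Second, I would lower bound $\mathbb{E}[T_e(n)]$ for each suboptimal $e$. Fix the block $B_k$ containing $e$ and consider the alternative instance in which only the mean of $e$ is raised from $0.5 - \Delta$ to $0.5 + \Delta$, making $e$ the unique optimal item of $B_k$. Under a consistent algorithm, $\mathbb{E}[T_e(n)] = o(n^\alpha)$ in the original instance, whereas $e$ must be chosen $n - o(n^\alpha)$ times in the alternative; the standard change-of-measure (Kullback--Leibler) argument then forces
\[
  \liminf_{n \to \infty} \frac{\mathbb{E}[T_e(n)]}{\log n} \geq \frac{1}{\mathrm{KL}(0.5 - \Delta \,\|\, 0.5)},
\]
where $\mathrm{KL}$ is the KL divergence between the associated Bernoulli laws. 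The only subtlety is that the algorithm also observes rewards from the other blocks; but the two instances differ solely in the law of item $e$, so the likelihood ratio involves only the rewards of $e$ and the mutually independent observations from the remaining blocks integrate out. Hence the single-bandit bound applies per item.

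Third, I would control the divergence using the elementary inequality $\mathrm{KL}(p \,\|\, q) \leq (p - q)^2 / (q(1 - q))$ with $p = 0.5 - \Delta$ and $q = 0.5$, which yields $\mathrm{KL}(0.5 - \Delta \,\|\, 0.5) \leq 4 \Delta^2$. Combining the three steps and using superadditivity of $\liminf$ over the sum, I obtain
\[
  \liminf_{n \to \infty} \frac{R(n)}{\log n} \geq \Delta \sum_{e \in \bar{A}^\ast} \frac{1}{4 \Delta^2} = \frac{L - K}{4 \Delta},
\]
as claimed. I expect the main obstacle to be the rigorous justification of the per-item change-of-measure step in the presence of $K$ parallel, mutually independent bandits: one must argue that consistency on the matroid bandit delivers the required per-arm pull guarantees, and that the extra cross-block information cannot reduce the number of pulls needed to separate the two instances. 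Once this reduction to a single Bernoulli bandit is made precise, the regret decomposition and the KL estimate are routine.
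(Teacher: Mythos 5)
Your proposal is correct and follows essentially the same route as the paper: the paper likewise exploits the equivalence of the partition matroid bandit with $K$ independent Bernoulli bandits, applies the classical asymptotic lower bound (Theorem 2.2 of Bubeck and Cesa-Bianchi, the same Lai--Robbins-type change-of-measure result you invoke) separately to each partition, and finishes with the identical inequality $\mathrm{kl}(p, q) \leq (p - q)^2 / (q(1 - q))$ at $p = 0.5 - \Delta$, $q = 0.5$. One minor caveat: to get the constant $1/\mathrm{kl}(0.5 - \Delta, 0.5)$ the alternative instance must raise the mean of $e$ to $0.5 + \epsilon$ and let $\epsilon \to 0$ (or cite the theorem as a black box, as the paper does), since fixing the alternative at $0.5 + \Delta$ only yields the weaker constant $1/\mathrm{kl}(0.5 - \Delta, 0.5 + \Delta)$, which would lose roughly a factor of four in the final bound.
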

\begin{proof}
The theorem is proved as follows:
\begin{align}
  \liminf_{n \rightarrow \infty} \frac{R(n)}{\log n}
  & \geq \sum_{k = 1}^K \sum_{e \in B_k - A^\ast}
  \frac{\Delta}{\mathrm{kl}(0.5 - \Delta, 0.5)} \nonumber \\
  & = \frac{(L - K) \Delta}{\mathrm{kl}(0.5 - \Delta, 0.5)} \nonumber \\
  & \geq \frac{L - K}{4 \Delta},
\end{align}
where $\mathrm{kl}(0.5 - \Delta, 0.5)$ is the KL divergence between two Bernoulli variables with means $0.5 - \Delta$ and 0.5. The first inequality is due to Theorem 2.2 \cite{bubeck12regret}, which is applied separately to each partition $B_k$. The second inequality is due to $\mathrm{kl}(p, q) \leq \frac{(p - q)^2}{q (1 - q)}$, where $p = 0.5 - \Delta$ and $q = 0.5$.
\end{proof}

\subsection{Summary of Theoretical Results}
\label{sec:summary}

We prove two upper bounds on the regret of $\omm$, one gap-dependent and one gap-free. These bounds can be summarized as:
\begin{align}
  \begin{aligned}
    \text{Theorem~\ref{thm:gap-dependent}} & \qquad O(L (1 / \Delta) \log n) \\
    \text{Theorem~\ref{thm:gap-free}} & \qquad O(\sqrt{K L n \log n}),
  \end{aligned}
\end{align}
where $\Delta = \min\limits_e \min\limits_{k \in \cO_e} \Delta_{e, k}$. Both bounds are sublinear in the number of episodes $n$, and at most linear in the rank $K$ of the matroid and the number of items $L$. In other words, they scale favorably with all quantities of interest and as a result we expect them to be practical.

Our upper bounds are reasonably tight. More specifically, the gap-dependent upper bound in Theorem~\ref{thm:gap-dependent} matches the lower bound in Theorem~\ref{thm:lower bound}, which is proved on a partition matroid bandit. Furthermore, the gap-free upper bound in Theorem~\ref{thm:gap-free} matches the lower bound of Audibert \etal~\cite{audibert14regret} in adversarial combinatorial semi-bandits, up to a factor of $\sqrt{\log n}$.

Our gap-dependent upper bound has the same form as the bound of Auer \etal~\cite{auer02finitetime} for multi-armed bandits. This observation suggests that the sample complexity of learning a maximum-weight basis of a matroid is comparable to that of the multi-armed bandit. The only major difference is in the definitions of the gaps. We conclude that learning with matroids is extremely sample efficient.


\section{Experiments}
\label{sec:experiments}

\begin{figure*}[t]
  \centering
  \includegraphics[width=6.8in, bb=0in 4.25in 8.5in 6.75in]{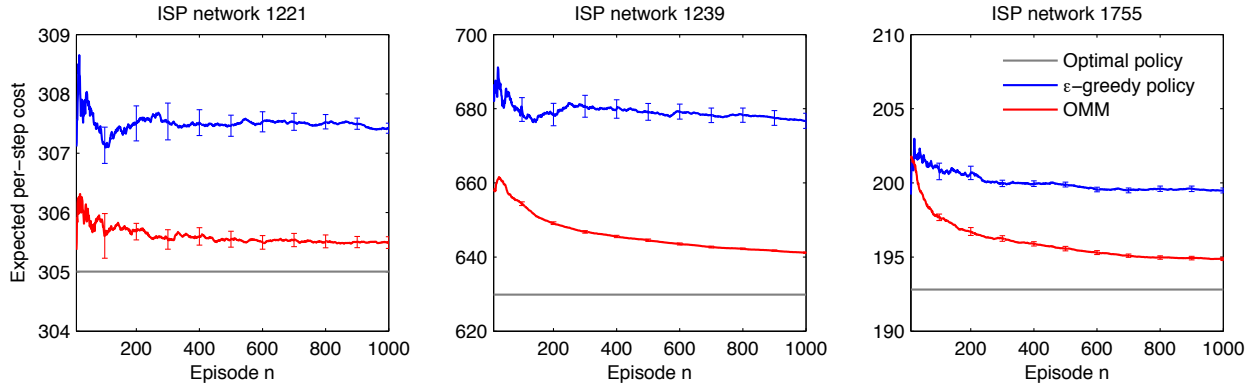}
  \caption{The expected per-step cost of building three minimum spanning trees in up to $10^3$ episodes.}
  \label{fig:latency trends}
\end{figure*}

\begin{table*}
  \centering
  \begin{tabular}{|l|cc|ccc|rrr|} \hline
    ISP & Number & Number & Minimum & Maximum & Average & Optimal & $\eps$-greedy & \\
    network & of nodes & of edges & latency & latency & latency & policy & policy & $\omm$ \\ \hline
    1221 & 108 & 153 & 1 & 17 & 2.78 & 305.00 & $307.42 \pm 0.08$ & $305.49 \pm 0.10$ \\
    1239 & 315 & 972 & 1 & 64 & 3.20 & 629.88 & $676.74 \pm 2.03$ & $641.17 \pm 0.18$ \\
    1755 & 87 & 161 & 1 & 31 & 2.91 & 192.81 & $199.49 \pm 0.16$ & $194.88 \pm 0.11$ \\
    3257 & 161 & 328 & 1 & 47 & 4.30 & 550.85 & $570.35 \pm 0.63$ & $559.80 \pm 0.10$ \\
    3967 & 79 & 147 & 1 & 44 & 5.19 & 306.80 & $320.30 \pm 0.52$ & $308.54 \pm 0.08$ \\
    6461 & 141 & 374 & 1 & 45 & 6.32 & 376.27 & $424.78 \pm 1.54$ & $381.48 \pm 0.07$ \\ \hline
  \end{tabular}
  \caption{The description of six ISP networks from our experiments and the expected per-step cost of building minimum spanning trees on these networks in $10^3$ episodes. All latencies and costs are in milliseconds.}
  \label{tab:latency summary}
\end{table*}

Our algorithm is evaluated on three matroid bandit problems: graphic (Section~\ref{sec:graphic matroid}), transversal (Section~\ref{sec:transversal matroid}), and linear (Section~\ref{sec:linear matroid}).

All experiments are episodic. In each episode, $\omm$ selects a basis $A^t$, observes the weights of the individual items in that basis, and then updates its model of the environment. The performance of $\omm$ is measured by the \emph{expected per-step return} in $n$ episodes:
\begin{align}
  \frac{1}{n} \E{\sum_{t = 1}^n f(A^t, \bw_t)}{\bw_1, \dots, \bw_n},
  \label{eq:per-step regret}
\end{align}
the expected cumulative return in $n$ episodes divided by $n$. $\omm$ is compared to two baselines. The first baseline is the maximum-weight basis $A^\ast$ in expectation. The basis $A^\ast$ is computed as in Equation~\ref{eq:optimal arm} and is our notion of optimality. The second baseline is an $\eps$-greedy policy, where $\eps = 0.1$. This setting of $\eps$ is common in practice and corresponds to 10\% exploration.


\subsection{Graphic Matroid}
\label{sec:graphic matroid}

In the first experiment, we evaluate $\omm$ on the problem of learning a routing network for an Internet service provider (ISP). We make the assumption that the routing network is a spanning tree. Our objective is to learn a tree that has the lowest expected latency on its edges.

Our problem can be formulated as a \emph{graphic matroid bandit}. The ground set $E$ are the edges of a graph, which represents the topology of a network. We experiment with six networks from the \emph{RocketFuel} dataset \cite{spring04measuring}, which contain up to 300 nodes and $10^3$ edges (Table~\ref{tab:latency summary}). A set of edges is considered \emph{independent} if it does not contain a cycle. The latency of edge $e$ is $\bw(e) = \bar{\bw}(e) - 1 + \eps$, where $\bar{\bw}(e)$ is the expected latency, which is recorded in our dataset; and $\eps \sim \mathrm{Exp}(1)$ is exponential noise. The latency $\bar{\bw}(e)$ ranges from one to 64 milliseconds. Our noise model is motivated by the following observation. The latency in ISP networks can be mostly explained by geographical distances \cite{choi04analysis}, the expected latency $\bar{\bw}(e)$. The noise tends to be small, on the order of a few hundred microseconds, and it is unlikely to cause high latency.

In Figure~\ref{fig:latency trends}, we report our results from three ISP networks. We observe the same trends on all networks. First, the expected cost of $\omm$ approaches that of the optimal solution $A^\ast$ as the number of episodes increases. Second, $\omm$ outperforms the $\eps$-greedy policy in less than 10 episodes. The expected costs of all policies on all networks are reported in Table~\ref{tab:latency summary}. We observe again that $\omm$ consistently outperforms the $\eps$-greedy policy, often by a large margin.

$\omm$ learns quickly because all of our networks are sparse. In particular, the number of edges in each network is never more than four times larger than the number of edges in its spanning tree. Therefore, at least in theory, each edge can be observed at least once in four episodes and our method can learn quickly the mean latency of each edge.


\subsection{Transversal Matroid}
\label{sec:transversal matroid}

\begin{figure*}[t]
  \centering
  \raisebox{-0.63in}{\includegraphics[scale=0.23]{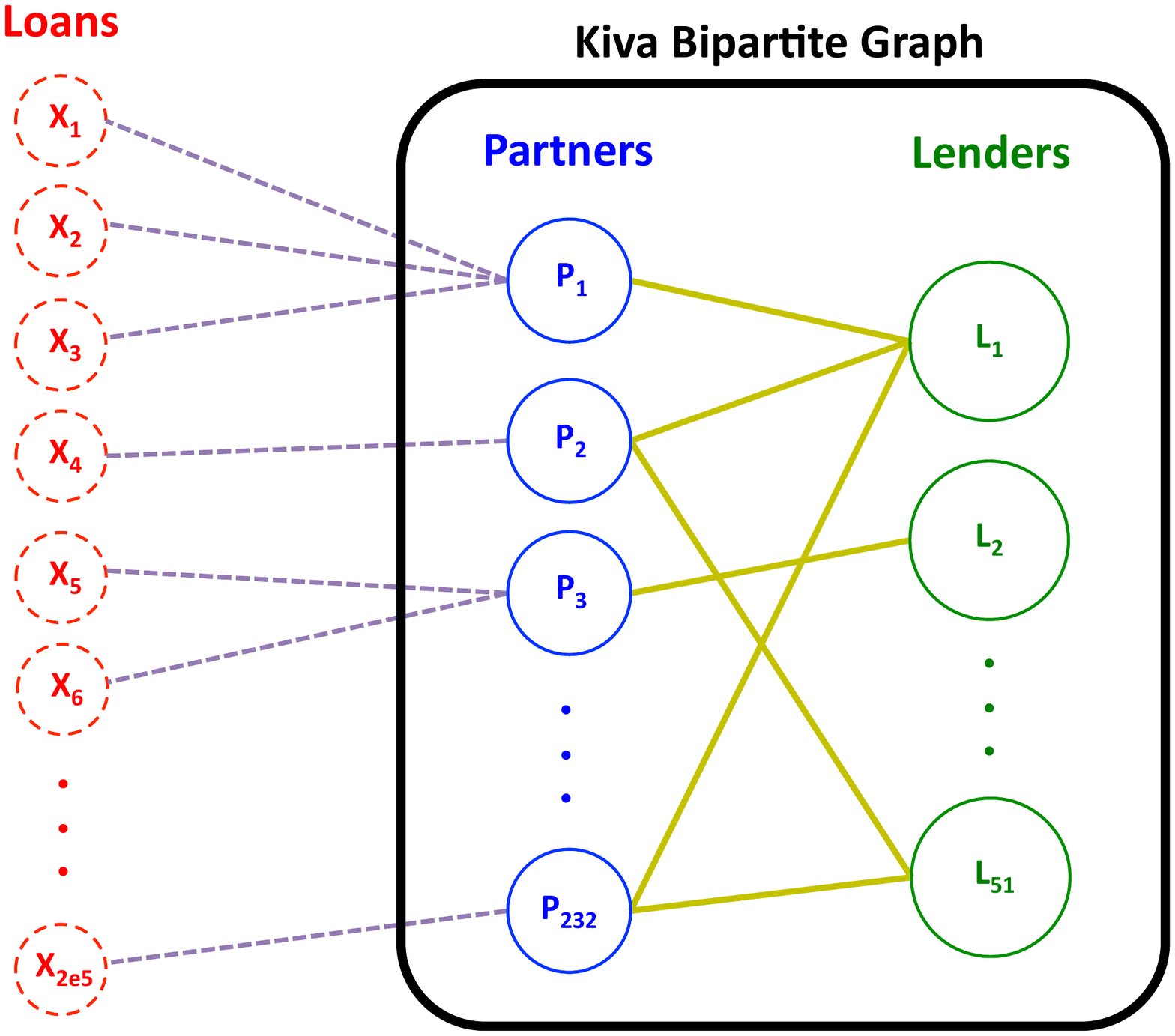}}
  \ \
  \raisebox{-0.85in}{\includegraphics[width=2.4in, bb=2.75in 4.25in 5.75in 6.75in]{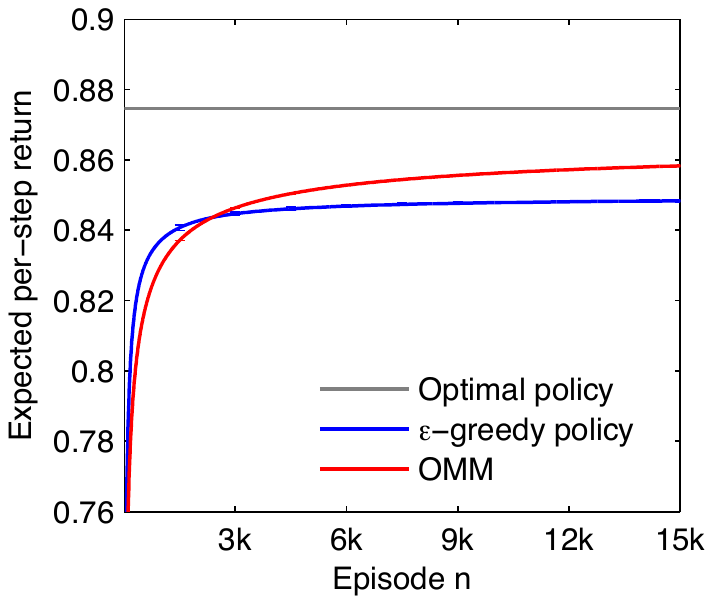}}
  \raisebox{0in}{{\small
  \setlength{\tabcolsep}{4pt}
  \begin{tabular}{|cr|ccr|} \hline
    Partner & $\bar{\bw}(e)$ & Lender & Num of & Avg \\
    id & & id & partners & rate \\ \hline
    46 & 1.0 & 31 & 200 & 0.728 \\
    70 & 1.0 & 2 & 2 & 0.924 \\
    72 & 1.0 & 20 & 195 & 0.725 \\
    88 & 1.0 & 23 & 207 & 0.724 \\
    168 & 0.983 & 44 & 49 & 0.712 \\
    231 & 0.981 & 48 & 186 & 0.723 \\
    179 & 0.970 & 24 & 149 & 0.743 \\
    157 & 0.951 & 10 & 180 & 0.735 \\
    232 & 0.940 & 40 & 10 & 0.718 \\
    123 & 0.934 & 42 & 113 & 0.721 \\ 
    142 & 0.925 & 7 & 168 & 0.745 \\
    130 & 0.919 & 32 & 23 & 0.690 \\ \hline
  \end{tabular}
  }}
  \\
  \vspace{0.1in}
  \hspace{0in} \textbf{(a)} \hspace{1.85in} \textbf{(b)} \hspace{2.05in} \textbf{(c)}
  \caption{\textbf{(a)} The Kiva dataset can be modeled as a bipartite graph connecting lenders to field partners, which, in turn, fund several loans in the region. \textbf{(b)} The expected per-step return of finding maximum weight transversal in up to $15$k episodes. \textbf{(c)} Top 12 selected partners assigned based on their mean success rate in the optimal solution $A^\ast$. The optimal solution involves 46 partner/lender assignments.}
  \label{fig:matching summary}
\end{figure*}

In the second experiment, we study the assignment of lending institutions (known as \emph{partners}) to \emph{lenders} in a microfinance setting, such as Kiva~\cite{kiva}. This problem can be formulated under a family of matroids, called \textit{transversal} matroids~\cite{edmonds65transversals}. The ground set $E$ of a transversal matroid is the set of left vertices of the corresponding bipartite graph, and the independence set $\cI$ consists of the sets of left vertices that belong to all possible matchings in the graph such that no two edges in a matching share an endpoint. The weight $\bar{\bw}(e)$ is the weight associated with the left vertices of the bipartite graph. The goal is to learn a transversal of the bipartite graph that maximizes the overall weight of selected left vertices. 

We used a sample of $194,876$ loans from the Kiva microfinance dataset~\cite{kiva}, and created a bipartite graph. Every loan is handled by a partner (Figure~\ref{fig:matching summary}-a). There are a total of $232$ partners in the dataset that represent the left vertices of the bipartite graph and therefore the ground set $E$ of the matroid. There are $286,874$ lenders in the dataset. We grouped these lenders into $51$ clusters according to the their location: $50$ representing each individual state in United States, and $1$ representing all foreign lenders. These $51$ lender clusters constitute the right vertices of the bipartite graph. There is an edge between a partner and a lender if the lender is among the top $50$\% supporters of the partner, resulting in approximately $5$k edges in the bipartite graph. The weight $\bar{\bw}(e)$ is the probability that a loan handled by partner $e$ will be paid back. We estimate it from the dataset as $\bar{\bw}(e) = \frac{1}{n_l} \sum_{i = 1}^{n_l} \bw_i(e)$, where $n_l$ is the number of loans handled by this partner. We assume $\bw_i(e)$ is $0.7$ if the loan $i$ is in repayment, $1$ if it is paid, and $0$ otherwise. At the beginning of each episode, we choose the loan $i$ at random. 

The optimal solution $A^\ast$ is a transversal in the graph that maximizes the overall success rate of the selected partners. Top twelve partners selected based on their mean success rate in the optimal solution are shown in Figure~\ref{fig:matching summary}-c. For each partner, the id of the lender to which this partner was assigned along with the number of eligible partners of the lender and their average success rate are listed in the Table. The objective of $\omm$ and $\eps$-greedy policies is similar to the optimal policy with the difference that success rates (i.e. $\bw(e)$) are not known beforehand, and they must be learned by interacting repeatedly with the environment. Comparison results of the three policies are reported in Figure~\ref{fig:matching summary}-b. Similar to the previous experiment, we observe the following trends. First, the expected return of $\omm$ approaches that of the optimal solution $A^\ast$ as the number of episodes increases. Second, $\omm$ outperforms the $\eps$-greedy policy.


\subsection{Linear Matroid}
\label{sec:linear matroid}

\begin{figure*}[t]
  \centering
  {\small
  \begin{tabular}{|l|r|l|} \hline
    Movie title & $\bar{\bw}(e)$ & Movie genres \\ \hline
    American Beauty & 0.568 & Comedy Drama \\
    Jurassic Park & 0.442 & Action Adventure Sci-Fi \\
    Saving Private Ryan & 0.439 & Action Drama War \\
    Matrix & 0.429 & Action Sci-Fi Thriller \\
    Back to the Future & 0.428 & Comedy Sci-Fi \\
    Silence of the Lambs & 0.427 & Drama Thriller \\
    Men in Black & 0.420 & Action Adventure Comedy Sci-Fi \\
    Fargo & 0.416 & Crime Drama Thriller \\
    Shakespeare in Love & 0.392 & Comedy Romance \\
    L.A. Confidential & 0.379 & Crime Film-Noir Mystery Thriller \\
    E.T. the Extra-Terrestrial & 0.376 & Children's Drama Fantasy Sci-Fi \\
    Ghostbusters & 0.361 & Comedy Horror \\ \hline
  \end{tabular}
  }
  \ \
  \raisebox{-0.92in}{\includegraphics[width=2.8in, bb=2.5in 4.25in 6in 6.75in]{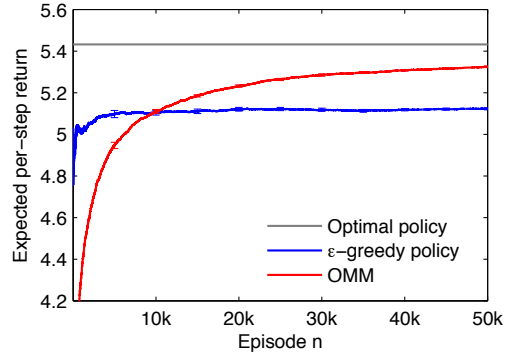}}
  \caption{\textbf{Left}. Twelve most popular movies in the optimal solution $A^\ast$. The optimal solution involves 17 movies. \textbf{Right}. The expected per-step return of three movie recommendation policies in up to $50$k episodes.}
  \label{fig:preferences}
\end{figure*}

In the last experiment, we evaluate $\omm$ on the problem of learning a set of diverse and popular movies. This kind of movies is typically recommended by existing content recommender systems. The movies are popular, and therefore the user is likely to choose them. The movies are diverse, and therefore cover many potential interests of the user.

Our problem can be formulated as a \emph{linear matroid bandit}. The ground set $E$ are movies from the \emph{MovieLens} dataset \cite{movielens}, a dataset of 6 thousand people who rated one million movies. We restrict our attention to 25 most rated movies and 75 movies that are not well known. So the cardinality of $E$ is 100. For each movie $e$, we define a feature vector $\bu_e \in \set{0, 1}^{18}$, where $\bu_e(j)$ indicates that movie $e$ belongs to genre $j$. A set of movies $X$ is considered \emph{independent} if for any movie $e \in X$, the vector $\bu_e$ cannot be written as a linear combination of the feature vectors of the remaining movies in $X$. This is our notion of diversity. The expected weight $\bar{\bw}(e)$ is the probability that movie $e$ is chosen. We estimate it as $\bar{\bw}(e) = \frac{1}{n_p} \sum_{i = 1}^{n_p} \bw_i(e)$, where $\bw_i(e)$ is the indicator that person $i$ rated movie $e$ and $n_p$ is the number of people in our dataset. At the beginning of each episode, we choose the person $i$ at random.

Twelve most popular movies from the optimal solution $A^\ast$ are listed in Figure~\ref{fig:preferences}. These movies cover a wide range of movie genres and appear to be diverse. This validates our assumption that linear independence is suitable for modeling diversity. The expected return of $\omm$ is reported in the same figure. We observe the same trends as in the previous experiments. More specifically, the expected return of $\omm$ approaches that of $A^\ast$ as the number of episodes increases and $\omm$ outperforms the $\eps$-greedy policy in $10$k episodes.


\section{Related Work}
\label{sec:related work}

Our problem can be viewed as a stochastic combinatorial semi-bandit \cite{gai12combinatorial}, where all feasible solutions are the independent sets of a matroid. Stochastic combinatorial semi-bandits were pioneered by Gai \etal~\cite{gai12combinatorial}, who proposed a UCB algorithm for solving these problems. Chen \etal~\cite{chen13combinatorial} proved that the expected cumulative regret of this method is $O(K^2 L (1 / \Delta) \log n)$. Our gap-dependent regret bound is $O(L (1 / \Delta) \log n)$, a factor of $K^2$ tighter than the bound of Chen \etal~\cite{chen13combinatorial}. Our analysis relies heavily on the properties of our problem and therefore we can derive a much tighter bound.

COMBAND \cite{cesabianchi12combinatorial}, OSMD \cite{audibert14regret}, and FPL \cite{neu13efficient} are algorithms for adversarial combinatorial semi-bandits. The main limitation of COMBAND and OSMD is that they are not guaranteed to be computationally efficient. More specifically, COMBAND needs to sample from a distribution over exponentially many solutions and OSMD needs to project to the convex hull of these solutions. FPL is computationally efficient but not very practical because its time complexity increases with time. On the other hand, $\omm$ is guaranteed to be computationally efficient but can only solve a special class of combinatorial bandits, matroid bandits.

Matroids are a broad and important class of combinatorial optimization problems \cite{oxley11matroid}, which has been an active area of research for the past 80 years. This is the first paper that studies a well-known matroid problem in the bandit setting and proposes a learning algorithm for solving it.

Our work is also related to submodularity \cite{nemhauser78approximation}. In particular, let:
\begin{align}
  g(X) = \max_{Y: Y \subseteq X, Y \in \cI} f(Y, \bar{\bw})
\end{align}
be the maximum weight of an independent set in $X$. Then it is easy to show that $g(X)$ is submodular and monotonic in $X$, and that the maximum-weight basis of a matroid is a solution to $A^\ast = \arg\max_{A: \abs{A} = K} g(A)$. Many algorithms for learning how to maximize a submodular function have been proposed recently \cite{guillory11online,yue11linear,gabillon13adaptive,wen13sequential,gabillon14largescale}. None of these algorithms are suitable for solving our problem. There are two reasons. First, each algorithm is designed to maximize a specific submodular function and our function $g$ may not be of that type. Second, the algorithms are only near optimal, learn a set $A$ such that $g(A) \geq (1 - 1 / e) g(A^\ast)$. Note that our method is guaranteed to be optimal and learn $A^\ast$.


\section{Conclusions}
\label{sec:conclusions}

This is the first work that studies the problem of learning a maximum-weight basis of a matroid, where the weights of the items are initially unknown, and have to be learned by interacting repeatedly with the environment. We propose a practical algorithm for solving this problem and bound its regret. The regret is sublinear in time and at most linear in all other quantities of interest. We evaluate our method on three real-world problems and show that it is practical.

Our regret bounds are $\Omega(\sqrt{L})$. Therefore, $\omm$ is not practical when the number of items $L$ is large. We believe that these kinds of problems can be solved efficiently by introducing additional structure, such as \emph{linear generalization}. In this case, the weight of each item would be modeled as a linear function of its features and the goal is to learn the parameters of this function.

Many combinatorial optimization problems can be viewed as optimization on a matroid or its generalizations, such as \emph{maximum-weight matching} on a bipartite graph and \emph{minimum cost flows}. In a sense, these are the hardest problems in combinatorial optimization that can be solved optimally in polynomial time \cite{papadimitriou98combinatorial}. In this work, we show that one of these problems is efficiently learnable. We believe that the key ideas in our work are quite general and can be applied to other problems that involve matroids.

\bibliographystyle{plain}
\bibliography{References}


\clearpage
\onecolumn
\appendix

\section{Technical Lemmas}
\label{sec:lemmas}

\newtheorem*{lem:bijection}{Lemma~\ref{lem:bijection}}
\begin{lem:bijection}
For any two matroid bases $A^\ast$ and $A^t$, there exists a bijection $\pi: \set{1, \dots, K} \to \set{1, \dots, K}$ such that:
\begin{align*}
  \set{a^t_1, \dots, a^t_{k - 1}, a^\ast_{\pi(k)}} \in \cI
  \quad \forall k = 1, \dots, K.
\end{align*}
In addition, $\pi(k) = i$ when $a^t_k = a^\ast_i$ for some $i$.
\end{lem:bijection}
\begin{proof}
Our proof is constructive. The key idea is to exchange items in $A^t$ for items in $A^\ast$ in backward order, from $a^t_K$ to $a^t_1$. For simplicity of exposition, we first assume that $A^\ast \cap A^t = \emptyset$.

First, we exchange item $a^t_K$. In particular, from the augmentation property of a matroid, we know that there exists an item $a^\ast_i \in A^\ast - (A^t - a^t_K)$ such that $A^t - a^t_K + a^\ast_i \in \cI$. We choose any such item $a^\ast_i$ and exchange it for $a^t_K$. The result is a basis:
\begin{align}
  B_{K - 1} = \set{a^t_1, \dots, a^t_{K - 1}, a^\ast_{\pi(K)}} \in \cI,
\end{align}
where $\pi(K) = i$. Second, we apply the same idea to item $a^t_{K - 1}$. In particular, from the augmentation property, we know that there exists an item $a^\ast_i \in A^\ast - (B_{K - 1} - a^t_{K - 1})$ such that $B_{K - 1} - a^t_{K - 1} + a^\ast_i \in \cI$. We select any such item $a^\ast_i$ and exchange it for $a^t_{K - 1}$. The result is another basis:
\begin{align}
  B_{K - 2} = \set{a^t_1, \dots, a^t_{K - 2}, a^\ast_{\pi(K - 1)}, a^\ast_{\pi(K)}} \in \cI,
\end{align}
where $\pi(K - 1) = i$. The same argument applies to item $a^t_{K - 2}$, all the way down to item $a^t_1$. The result is a sequence of bases:
\begin{align}
  B_{k - 1} = \set{a^t_1, \dots, a^t_{k - 1}, a^\ast_{\pi(k)}, \dots, a^\ast_{\pi(K)}} \in \cI
  \quad \forall k = 1, \dots, K.
\end{align}
Our main claim follows from the hereditary property of a matroid, any subset of an independent set is independent.

Finally, suppose that $A^\ast \cap A^t \neq \emptyset$. Then our construction changes in only one step. In any step $k$, we set $\pi(k)$ to $i$ when $a^t_k = a^\ast_i$ for some $i$. The items $a^t_k$ and $a^\ast_i$ can be always exchanged because $a^\ast_i \notin B_k - a^t_k$. Otherwise, $B_k$ would be a set with two identical items, $a^t_k$ and $a^\ast_i$, which contradicts to the fact that $B_k$ is a basis.
\end{proof}

\begin{lemma}
\label{lem:pulls} For any item $e \in \bar{A}^\ast$ and $k \leq K_e$:
\begin{align*}
  \E{\sum_{t = 1}^n \mathds{1}_{e, k}(t) \I{T_e(t - 1) > \ell}}{\bw_1, \dots, \bw_n} \leq
  \frac{4}{3} \pi^2
\end{align*}
when $\ell = \floors{\frac{8}{\Delta_{e, k}^2} \log n}$.
\end{lemma}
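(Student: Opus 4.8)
The plan is to follow the classical UCB ``bad-event'' argument, adapted to the semi-bandit counts maintained by $\omm$. The quantity to be controlled is the expected number of episodes in which item $e$ is chosen in place of $a^\ast_k$ (the event $\mathds{1}_{e, k}(t) = 1$) even though $e$ has already been selected more than $\ell$ times. I would bound the probability of this joint event in each episode $t$ and then sum over $t = 1, \dots, n$, so that $\E{\sum_t \mathds{1}_{e, k}(t) \I{T_e(t-1) > \ell}}{\bw_1, \dots, \bw_n} = \sum_t \Pr(\mathds{1}_{e, k}(t) = 1, \ T_e(t-1) > \ell)$ and the whole argument reduces to controlling a convergent series.

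The first step exploits Theorem~\ref{thm:regret decomposition}: whenever $\mathds{1}_{e, k}(t) = 1$ we have $U_t(e) \geq U_t(a^\ast_k)$, i.e.
\[
  \hat{w}_{e, T_e(t-1)} + c_{t-1, T_e(t-1)} \geq \hat{w}_{a^\ast_k, T_{a^\ast_k}(t-1)} + c_{t-1, T_{a^\ast_k}(t-1)}.
\]
I would then argue that this inequality forces at least one of two confidence-bound failures: either the optimistic estimate of $a^\ast_k$ dips below its mean, or the estimate of $e$ overshoots its mean by more than the radius, $\hat{w}_{e, T_e(t-1)} > \bar{\bw}(e) + c_{t-1, T_e(t-1)}$. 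The point is that if both bounds held we would have $U_t(a^\ast_k) \geq \bar{\bw}(a^\ast_k)$ and $U_t(e) \leq \bar{\bw}(e) + 2 c_{t-1, T_e(t-1)}$, and the choice $\ell = \floors{\frac{8}{\Delta_{e, k}^2} \log n}$ guarantees $2 c_{t-1, T_e(t-1)} < \Delta_{e, k}$ precisely when $T_e(t-1) > \ell$ (using $t \leq n$, hence $\log(t-1) \leq \log n$). In that regime $U_t(e) < \bar{\bw}(e) + \Delta_{e, k} = \bar{\bw}(a^\ast_k) \leq U_t(a^\ast_k)$, contradicting the displayed inequality, so a genuine deviation must have occurred. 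Here $k \leq K_e$ ensures $\Delta_{e, k} > 0$, so $\ell$ and this threshold are well defined.

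The second step turns the two deviation events into probabilities, and this is where I expect the main obstacle to lie: the counts $T_e(t-1)$ and $T_{a^\ast_k}(t-1)$ are random and correlated with the observed weights, so Hoeffding's inequality cannot be applied directly to $\hat{w}_{e, T_e(t-1)}$. I would resolve this with the standard union-bound-over-counts device, replacing the random counts by a union over all admissible fixed values $s' \in \set{1, \dots, t-1}$ and $s \in \set{\ell+1, \dots, t-1}$. For each fixed count, $\hat{w}_{\cdot, \cdot}$ is an average of a fixed number of i.i.d.\ samples, since the weights $\bw_t$ are drawn i.i.d.\ from $P$ independently of the algorithm's past choices; Hoeffding then bounds each deviation probability by $\exp(-2 s\, c_{t-1, s}^2) = (t-1)^{-4}$, and similarly for $a^\ast_k$.

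The final step is bookkeeping: summing the two families of failure probabilities over the $O(t^2)$ pairs of counts yields a per-episode bound of order $(t-1)^{-2}$, and summing over $t$ converges because $\sum_{t \geq 1} t^{-2} = \pi^2 / 6$. Tracking the constants through this double sum produces the stated bound $\frac{4}{3} \pi^2$. I expect the count-decoupling argument to be the only delicate point; the radius-versus-gap threshold and the closing series estimate are then routine.
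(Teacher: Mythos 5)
Your proposal is correct and matches the paper's own proof essentially step for step: the reduction via Theorem~\ref{thm:regret decomposition} to the event $U_t(e) \geq U_t(a^\ast_k)$, the case split into an underestimate of $a^\ast_k$, an overestimate of $e$, or a gap smaller than twice the confidence radius (with the last case ruled out by $T_e(t-1) > \ell$ and the choice of $\ell$), the union bound over fixed values of the random counts so that Hoeffding's inequality applies and gives a $t^{-4}$ bound per pair, and the closing convergent series of order $t^{-2}$ yielding $\frac{4}{3}\pi^2$. There are no gaps; this is the same argument as in the paper's appendix.
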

\begin{proof}
First, note that the event $\mathds{1}_{e, k}(t)$ implies $U_t(e) \geq U_t(a^\ast_k)$ (Theorem~\ref{thm:regret decomposition}). Second, by the design of $\omm$, the counter $T_e(t)$ increases when the event $\mathds{1}_{e, k}(t)$ happens, for any $k$. Based on these facts, it follows that:
\begin{align}
  \sum_{t = 1}^n \mathds{1}_{e, k}(t) \I{T_e(t - 1) > \ell}
  & = \sum_{t = \ell + 1}^n \mathds{1}_{e, k}(t) \I{T_e(t - 1) > \ell} \nonumber \\
  & \leq \sum_{t = \ell + 1}^n \I{U_t(e) \geq U_t(a^\ast_k), \ T_e(t - 1) > \ell} \nonumber \\
  & \leq \sum_{t = \ell + 1}^n \sum_{s = 1}^t \sum_{s_e = \ell + 1}^t
  \I{\hat{w}_{e, s_e} + c_{t - 1, s_e} \geq \hat{w}_{a^\ast_k, s} + c_{t - 1, s}} \nonumber \\
  & = \sum_{t = \ell}^{n - 1} \sum_{s = 1}^{t + 1} \sum_{s_e = \ell + 1}^{t + 1}
  \I{\hat{w}_{e, s_e} + c_{t, s_e} \geq \hat{w}_{a^\ast_k, s} + c_{t, s}}.
\end{align}
When $\hat{w}_{e, s_e} + c_{t, s_e} \geq \hat{w}_{a^\ast_k, s} + c_{t, s}$, at least one of the following events must happen:
\begin{align}
  \hat{w}_{a^\ast_k, s} & \leq \bar{\bw}(a^\ast_k) - c_{t, s} \label{eq:H1} \\
  \hat{w}_{e, s_e} & \geq \bar{\bw}(e) + c_{t, s_e} \label{eq:H2} \\
  \bar{\bw}(a^\ast_k) & < \bar{\bw}(e) + 2 c_{t, s_e}. \label{eq:minimum pulls}
\end{align}
We bound the probability of the first two events (Equations~\ref{eq:H1} and \ref{eq:H2}) using Hoeffding's inequality:
\begin{align}
  P(\hat{w}_{a^\ast_k, s} \leq \bar{\bw}(a^\ast_k) - c_{t, s})
  & \leq \exp[-4 \log t] = t^{-4} \\
  P(\hat{w}_{e, s_e} \geq \bar{\bw}(e) + c_{t, s_e})
  & \leq \exp[-4 \log t] = t^{-4}.
\end{align}
When $s_e \geq \frac{8}{\Delta_{e, k}^2} \log n$, the third event (Equation~\ref{eq:minimum pulls}) cannot happen because:
\begin{align}
  \bar{\bw}(a^\ast_k) - \bar{\bw}(e) - 2 c_{t, s_e} =
  \Delta_{e, k} - 2 \sqrt{\frac{2 \log t}{s_e}} \geq
  0.
\end{align}
This is guaranteed when $\ell = \floors{\frac{8}{\Delta_{e, k}^2} \log n}$. Finally, we combine all of our claims and get:
\begin{align}
  \E{\sum_{t = 1}^n \mathds{1}_{e, k}(t) \I{T_e(t - 1) > \ell}}{\bw_1, \dots, \bw_n}
  & \leq \sum_{t = \ell}^{n - 1} \sum_{s = 1}^{t + 1} \sum_{s_e = \ell + 1}^{t + 1}
  \big[P(\hat{w}_{a^\ast_k, s} \leq \bar{\bw}(a^\ast_k) - c_{t, s}) + {} \nonumber \\
  & \hspace{1.1in} P(\hat{w}_{e, s_e} \geq \bar{\bw}(e) + c_{t, s_e})\big] \nonumber \\
  & \leq \sum_{t = 1}^\infty 2 (t + 1)^2 t^{-4} \nonumber \\
  & \leq \sum_{t = 1}^\infty 8 t^{-2} \nonumber \\
  & =\frac{4}{3} \pi^2.
\end{align}
The last step is due to the fact that $\displaystyle \sum_{t = 1}^\infty t^{-2} = \frac{\pi^2}{6}$.
\end{proof}

\begin{lemma}
\label{lem:multiple optimal pulls} Let $\Delta_1 \geq \ldots \geq \Delta_K$ be a sequence of $K$ positive numbers. Then:
\begin{align*}
  \left[\Delta_1 \frac{1}{\Delta_1^2} + \sum_{k = 2}^K \Delta_k
  \left(\frac{1}{\Delta_k^2} - \frac{1}{\Delta_{k - 1}^2}\right)\right] \leq
  \frac{2}{\Delta_K}.
\end{align*}
\end{lemma}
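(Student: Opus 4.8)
The plan is to rewrite the bracketed expression by first simplifying its leading term, then bound each remaining summand through a difference-of-squares factorization, and finally collapse a telescoping sum. First I would note that the leading term is $\Delta_1 \cdot \frac{1}{\Delta_1^2} = \frac{1}{\Delta_1}$, so that the left-hand side becomes $\frac{1}{\Delta_1} + \sum_{k=2}^K \Delta_k\left(\frac{1}{\Delta_k^2} - \frac{1}{\Delta_{k-1}^2}\right)$, and all the work then concentrates on the summands with $k \ge 2$.

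The main idea is to control each such summand. I would factor the difference of reciprocal squares as $\frac{1}{\Delta_k^2} - \frac{1}{\Delta_{k-1}^2} = \left(\frac{1}{\Delta_k} - \frac{1}{\Delta_{k-1}}\right)\left(\frac{1}{\Delta_k} + \frac{1}{\Delta_{k-1}}\right)$. Because the sequence is nonincreasing and positive, $\Delta_k \le \Delta_{k-1}$ gives $\frac{1}{\Delta_k} \ge \frac{1}{\Delta_{k-1}} > 0$, so the first factor is nonnegative and the second factor satisfies $\frac{1}{\Delta_k} + \frac{1}{\Delta_{k-1}} \le \frac{2}{\Delta_k}$. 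Multiplying through by $\Delta_k > 0$ then yields the per-term bound $\Delta_k\left(\frac{1}{\Delta_k^2} - \frac{1}{\Delta_{k-1}^2}\right) \le 2\left(\frac{1}{\Delta_k} - \frac{1}{\Delta_{k-1}}\right)$.

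Finally, I would sum this per-term estimate over $k = 2, \dots, K$. The right-hand side telescopes to $2\left(\frac{1}{\Delta_K} - \frac{1}{\Delta_1}\right)$, so the entire left-hand side is at most $\frac{1}{\Delta_1} + 2\left(\frac{1}{\Delta_K} - \frac{1}{\Delta_1}\right) = \frac{2}{\Delta_K} - \frac{1}{\Delta_1} \le \frac{2}{\Delta_K}$, which is exactly the claimed inequality.

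Since the lemma is a purely algebraic inequality, there is no substantial obstacle; the only thing to get right is the direction of the monotonicity step. One must use $\frac{1}{\Delta_k} + \frac{1}{\Delta_{k-1}} \le \frac{2}{\Delta_k}$, with the \emph{smaller} $\Delta_k$ in the denominator, and keep careful track of the sign of the difference factor so the inequality points the correct way. The factor of $2$ in the final bound arises precisely from this step, and the discarded $-\frac{1}{\Delta_1}$ shows the estimate carries a little slack.
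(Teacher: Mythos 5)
Your proof is correct, and it arrives at the paper's exact intermediate bound $\frac{2}{\Delta_K} - \frac{1}{\Delta_1}$ by a slightly different algebraic path. The paper first rearranges the whole left-hand side by summation by parts into $\sum_{k=1}^{K-1} \frac{\Delta_k - \Delta_{k+1}}{\Delta_k^2} + \frac{1}{\Delta_K}$, and only then invokes monotonicity, replacing $\Delta_k^2$ by $\Delta_k \Delta_{k+1}$ in each denominator so that the increments telescope. You skip the rearrangement and bound each original summand directly, factoring $\frac{1}{\Delta_k^2} - \frac{1}{\Delta_{k-1}^2} = \left(\frac{1}{\Delta_k} - \frac{1}{\Delta_{k-1}}\right)\left(\frac{1}{\Delta_k} + \frac{1}{\Delta_{k-1}}\right)$ and using $\frac{1}{\Delta_k} + \frac{1}{\Delta_{k-1}} \leq \frac{2}{\Delta_k}$, which converts each term into the telescoping quantity $2\left(\frac{1}{\Delta_k} - \frac{1}{\Delta_{k-1}}\right)$. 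Both arguments use monotonicity exactly once per index and finish with the same telescoping sum; yours is marginally more self-contained (no global rearrangement to verify), while the paper's version isolates the exact term $\frac{1}{\Delta_K}$ up front and confines the inequality to the increments $\Delta_k - \Delta_{k+1}$. Your sign-tracking is also right where it needs to be: the factor $\frac{1}{\Delta_k} - \frac{1}{\Delta_{k-1}}$ is nonnegative, so multiplying your bound on the second factor through it is legitimate, and the discarded $-\frac{1}{\Delta_1}$ accounts for the same slack the paper notes in its final strict inequality.
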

\begin{proof}
First, we note that:
\begin{align}
  \left[\Delta_1 \frac{1}{\Delta_1^2} + \sum_{k = 2}^K \Delta_k
  \left(\frac{1}{\Delta_k^2} - \frac{1}{\Delta_{k - 1}^2}\right)\right] =
  \sum_{k = 1}^{K - 1} \frac{\Delta_k - \Delta_{k + 1}}{\Delta_k^2} + \frac{1}{\Delta_K}.
\end{align}
Second, by our assumption, $\Delta_k \geq \Delta_{k + 1}$ for all $k < K$. Therefore:
\begin{align}
  \sum_{k = 1}^{K - 1} \frac{\Delta_k - \Delta_{k + 1}}{\Delta_k^2} + \frac{1}{\Delta_K}
  & \leq \sum_{k = 1}^{K - 1} \frac{\Delta_k - \Delta_{k + 1}}{\Delta_k \Delta_{k + 1}} +
  \frac{1}{\Delta_K} \nonumber \\
  & = \sum_{k = 1}^{K - 1} \left[\frac{1}{\Delta_{k + 1}} - \frac{1}{\Delta_k}\right] +
  \frac{1}{\Delta_K} \nonumber \\
  & = \frac{2}{\Delta_K} - \frac{1}{\Delta_1} \nonumber \\
  & < \frac{2}{\Delta_K}.
\end{align}
This concludes our proof.
\end{proof}

\end{document}